\newtheorem{Theorem}{\bf Theorem}
\newtheorem{Definition}{\bf Definition}
\newtheorem{Corollary}{\bf Corollary}
\newtheorem{Assumption}{\bf Assumption}
\newcommand{\Perp}{\perp\!\!\!\perp}
\newcommand{\multirowoffset}{-0.5\dimexpr \aboverulesep + \belowrulesep + \cmidrulewidth}
\title{Robust Causal Discovery under Imperfect Structural Constraints}
\author{
    %Authors
    % All authors must be in the same font size and format.
    % Written by AAAI Press Staff\textsuperscript{\rm 1}\thanks{With help from the AAAI Publications Committee.}\\
    % AAAI Style Contributions by Pater Patel Schneider,
    % Sunil Issar,\\
    Zidong Wang\textsuperscript{\rm 1},
    Xi Lin\textsuperscript{\rm 1},
    Chuchao He\textsuperscript{\rm 2,}\thanks{Corresponding author.},
    Xiaoguang Gao\textsuperscript{\rm 3}
}
\title{My Publication Title --- Single Author}
\author {
    Author Name
}
\title{My Publication Title --- Multiple Authors}
\author {
    % Authors
    First Author Name\textsuperscript{\rm 1},
    Second Author Name\textsuperscript{\rm 2},
    Third Author Name\textsuperscript{\rm 1}
}
\begin{document}

\maketitle

\begin{abstract}
Robust causal discovery from observational data under imperfect prior knowledge remains a significant and largely unresolved challenge. Existing methods typically presuppose perfect priors or can only handle specific, pre-identified error types. And their performance degrades substantially when confronted with flawed constraints of unknown location and type. This decline arises because most of them rely on inflexible and biased thresholding strategies that may conflict with the data distribution. To overcome these limitations, we propose to harmonizes knowledge and data through prior alignment and conflict resolution. First, we assess the credibility of imperfect structural constraints through a surrogate model, which then guides a sparse penalization term measuring the loss between the learned and constrained adjacency matrices.  We theoretically prove that, under ideal assumption, the knowledge-driven objective aligns with the data-driven objective. Furthermore, to resolve conflicts when this assumption is violated, we introduce a multi-task learning framework optimized via multi-gradient descent, jointly minimizing both objectives. Our proposed method is robust to both linear and nonlinear settings. Extensive experiments, conducted under diverse noise conditions and structural equation model types, demonstrate the effectiveness and efficiency of our method  under imperfect structural constraints.
\end{abstract}

% Uncomment the following to link to your code, datasets, an extended version or similar.
%
\begin{links}
    \link{Code}{https://github.com/wzd2502/RoaDs}
    % \link{Extended version}{https://aaai.org/example/extended-version}
\end{links}

\section{Introduction}
Causal discovery from observational data is a cornerstone of artificial intelligence and scientific inquiry \cite{DBLP:books/daglib/0023012, pearl2009causality}. By revealing the underlying causal mechanism and representing as a directed acyclic graph (DAG), it provides the fundamental structure required for downstream tasks such as causal inference \cite{hernan2010causal, peters2017elements}, and causal representation learning \cite{scholkopf2021toward, brehmer2022weakly}. A central topic in causal discovery is identifiability \cite{vowels2022d}. Under the causal sufficiency and faithfulness assumptions \cite{koller2009probabilistic}, traditional combinatorial optimization methods can identify the structure up to its Markov Equivalence Class (MEC), which is also known as Bayesian network structure learning \cite{glymour2019review, kitson2023survey}. This full DAG-level identifiability can be achieved either by using interventional data or by imposing stricter assumptions on the data-generating process, such as non-Gaussian noise or nonlinear structural equation models (SEMs) \cite{vowels2022d}. These stronger assumptions often enable the problem to be cast as a continuous optimization problem, making it solvable by zero-order \cite{shimizu2011directlingam}, first-order \cite{zheng2018dags, ng2020role}, or second-order optimization methods \cite{rolland2022score}.

\begin{figure}
    \centering
    \includegraphics[width=1\linewidth]{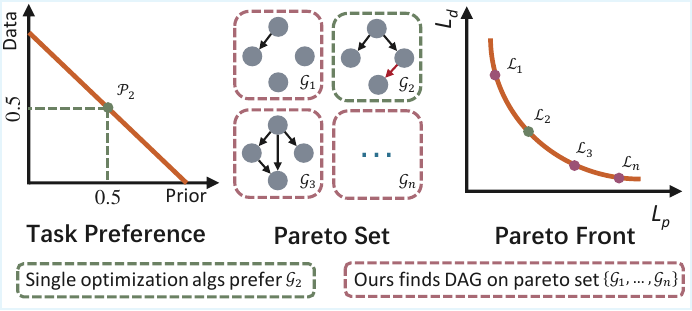}
    \caption{Robustness to imperfect constraints. A single-objective baseline assigns equal weight and is misled by the flawed prior (red arrow in $\mathcal{G}_2$), whereas ours identifies the conflict and discovers DAG on the Pareto set.}
    \label{fig:motivation}
\end{figure}
% The above mentioned methods mainly focus on the clear, original, clear causal discovery problems on observation dataset. And more extension has been explored in various topics, such as perfect or imperfect interventions [], latent variables case [], large scale case [], time-series data [], representation learning [], multi-domain knowledge, heterogeneous data [], et al. 
However, in numerous real-world applications, such as rare disease diagnosis or industrial fault analysis, high-quality observational data are often scarce and difficult to obtain. These domains typically possess a wealth of expert prior knowledge (e.g., positive or negative edge constraints) \cite{constantinou2023impact, brouillard2024landscape}. Consequently, how to effectively integrate such prior knowledge with data-driven methods has become an important yet challenging research direction.

Most existing methods are designed for perfect priors (no errors in constraints): combinatorial-based approaches typically treat priors as hard constraints, such as initializing the search or populating a tabu list \cite{de2007bayesian, chen2025mitigating, wang2025uncertain}; continuous-based approaches incorporate priors as soft penalty terms or as hard optimization goals \cite{sun2023nts, chen2025continuous}. In practice, however, expert knowledge is often imperfect, potentially containing overlooked true causal edges or erroneously introduced spurious ones. When faced with such imperfect priors, the performance of existing methods degrades sharply. 
We explicate this issue from a multi-objective optimization perspective in Figure ~\ref{fig:motivation}. Previous works typically use weighted sum scalarization to combine the data-driven and knowledge-driven objectives, which restricts the solution to a single, predetermined point on the Pareto front. Furthermore, these methods can neither adaptively correct erroneous priors nor adjust the weight of the knowledge-based objective to reflect its credibility. When priors are unreliable, a fixed, high weight inevitably forces the model to overfit to this incorrect DAG, such as \(\mathcal{G}_2\) in Pareto set.

% This makes the model highly sensitive to incorrect priors, causing it to overfit to these flaws and ultimately learn a causal graph that deviates significantly from the ground truth.

To tackle this dilemma, we build upon continuous optimization methods to develop a robust framework capable of handling imperfect structural constraints. Our approach achieves this through two components: \textbf{Prior Alignment}, which employs a surrogate model dynamically modulating the weights of imperfect constraints based on the observational data; \textbf{Conflict Resolution}, which leverages multi-task learning (MTL) to explicitly manage the trade-off between the data-driven and knowledge-driven objectives. We named it as \underline{Ro}bust C\underline{a}usal \underline{D}iscovery under Imperfect \underline{s}tructural constraints (\textbf{RoaDs}). Our main contributions are as follows:

\begin{itemize}
    \item We introduce a consistent constraint assumption and use a surrogate model to learn continuous weights for priors.  
    
    \item We design the knowledge-based optimization goal based on consistent constraints, and theoretically prove the asymptotic consistency of it.
 
    \item We employ Multi Gradient Descent Algorithm (MGDA), enhanced with gradient normalization, to efficiently find a balanced Pareto stationary point for MTL problem.

    \item In experimental evaluation, we demonstrate the superior robustness and effectiveness of RoaDs against SOTA methods across diverse and challenging settings.
\end{itemize}

\section{Related Works}

\paragraph{Causal discovery under structural constraints} 
For combinatorial-based methods, integrating edge constraints is relatively straightforward, typically by restricting the search space \cite{de2007bayesian, colombo2014order,constantinou2023impact}. However, path constraints, which are weaker and non-decomposable, need the graphical search space or specialized data structures to entail \cite{chen2016learning, wang2021learning, wang2025large}. A key limitation of these approaches is their reliance on the assumption that all provided constraints are perfect and error-free. For continuous-based approaches, perfect edge constraints are often handled in two ways: either enforced as hard constraints that are optimized simultaneously with the acyclicity constraint \cite{hasan2022kcrl,sun2023nts,wang2024incorporating}, or by directly modifying the gradients of the adjacency matrix to steer the search \cite{bello2022dagma}. Imperfect priors are typically handled via soft penalties, where constraints are formulated as differentiable terms, such as a cross-entropy loss measuring constraint violation \cite{li2024weakly, chen2025continuous}. To handle path constraints, this paradigm involves employing partial order-based optimization strategies \cite{bandifferentiable}.

More recently, a nascent line of work has explored using Large Language Models (LLMs) as a proxy for domain experts \cite{kiciman2023causal}. LLMs have been used to generate initial graphs \cite{ban2025integrating}, suggest post-hoc adjustments \cite{khatibi2024alcm}, or fuse structural priors from text \cite{zhou2024causalbench, ban2025llm}.

For a broader survey of general causal discovery methods, we refer the reader to Appendix A.

\paragraph{Multi-task Learning} MTL is quite a hot topic in the machine learning community \cite{zhang2021survey}. MTL can improve the generalization and reduce the cost of learned models, thus it is widely applied in many scenarios \cite{zhao2022inherent}. Key research in MTL involves designing shared architectures and managing conflicting task objectives \cite{lin2023libmtl}. Our work concentrates on the latter, employing multi-objective optimization (MOO) to mitigate the conflict between data-driven and knowledge-driven objectives for causal discovery.

MOO solvers can be broadly categorized into two families \cite{zhang2024libmoon}. The first, aggregation-based methods, transforms the multi-objective problem into a single-objective one by aggregating individual loss functions, such as Linear scalarization \cite{miettinen1999nonlinear}, the Tchebycheff method \cite{zhang2007moea}, Smooth TCH \cite{lin2024smooth}. The second family, gradient-manipulation-based methods, operates directly on the gradients of each task to find a descent direction that improves all objectives. Prominent examples include the MGDA \cite{sener2018multi}, its preference-based extensions \cite{lin2019pareto}, and normalization version \cite{chen2018gradnorm}.

\section{Preliminary}

\subsection{Causal discovery}
A causal structure can be represented by a DAG \( \mathcal{G} = (\bm{V}, \bm{E}) \), where \( \bm{V} = \{X_1, \dots, X_{n_v}\} \) is a set of variables and \( \bm{E} \) is the set of edges. An edge \( X_i \to X_j \) implies that \( X_i \) is a direct cause (parent) of \( X_j \) \cite{koller2009probabilistic}, denoted as \( X_i \in \Pi_j^{\mathcal{G}} \). We consider the Additive Noise Model (ANM) \cite{hoyer2008nonlinear}, where each variable is generated by a function of its parents plus an independent noise term
\(X_j = f_j(\Pi_j^{\mathcal{G}}) + \epsilon_j.\)
Here, \( f_j \) is a causal mechanism, and the noise terms \( \bm{\epsilon} = \{\epsilon_1, \dots, \epsilon_{n_v}\} \) are assumed to be mutually independent with zero mean (\(\mathbb{E}[\epsilon_j] = 0\)) and covariance matrix \(\mathrm{diag}({\sigma _1},\dots,{\sigma _{{n_v}}})\). Given an i.i.d. dataset \( \mathbf{X} = [\mathbf{x}_1 | \dots | \mathbf{x}_{n_v}] \in \mathbb{R}^{n_d \times n_v} \), the goal of causal discovery is to find the optimal DAG \( \mathcal{G} \) by solving a continuous optimization problem:
\begin{equation}\label{eq:eq1}
    \begin{array}{l}
    \mathop {\min }\limits_\mathbf{f} \: \sum_{j=1}^{n_v} \mathcal{L}(\mathbf{x}_j, f_j(\mathbf{X})) \\
    s.t. \:  \mathcal{G}(\mathbf{f}) \text{ is acyclic},
    \end{array}
\end{equation}
where \( \mathcal{L}(\cdot) \) is a least squares loss or negative log-likelihood loss, and \( \mathcal{G}(\mathbf{f}) \) is the DAG induced by the functional dependencies in \( \mathbf{f} = \{f_1, \dots, f_{n_v}\} \). Each \( f_j \) can be parameterized using a Multilayer Perceptron: \( f_j(\mathbf{X}) = \mathrm{MLP}(\mathbf{X}; \theta_j) \), where \( \bm{\theta} = \{\theta_1, \dots, \theta_{n_v}\} \). \( \theta_j = \{A_j^{(k)}\}_{k=1}^{n_h} \) are the parameters for the \(j\)-th MLP, and \( A_j^{(k)} \in \mathbb{R}^{d_{k-1} \times d_k} \) denotes the weights of the \(k\)-th layer \cite{lachapellegradient, zheng2020learning}. Under such condition, the weighted adjacency matrix can be approximately expressed as \( W(\bm{\theta}) \in \mathbb{R}^{n_v \times n_v} \). The entry \( [W(\bm{\theta})]_{ij} \) quantifies the causal influence from \( X_i \) to \( X_j \) and is defined as
\([W(\bm{\theta})]_{ij} = \| [A_j^{(1)}]_{:,i} \|_2,\)
Consequently, the optimization problem from Eq.~\eqref{eq:eq1} is reformulated as:
\begin{equation}\label{eq:eq2}
\begin{array}{l}
\mathop {\min }\limits_{\bm{\theta}} \: \frac{1}{n_d} \sum_{j=1}^{n_v} \| \mathbf{x}_j - \mathrm{MLP}(\mathbf{X}; \theta_j) \|_F^2 + \lambda_1 \| W(\bm{\theta}) \|_1 \\
s.t. \: h(W(\bm{\theta})) = \mathrm{tr}(e^{W(\bm{\theta}) \circ W(\bm{\theta})}) - n_v = 0.
\end{array}
\end{equation}
Problem \eqref{eq:eq2} can be transformed into unconstrained optimization form using Augmented Lagrangian Method (ALM) \cite{zheng2018dags}. For brevity, we will henceforth denote the original objective function as \ \( \mathcal{F}_{\mathbf{X}}(\bm{\theta}) \), respectively, and the constrained objective function as  \(\mathcal{H}(W(\bm{\theta}))\)
\begin{equation}\label{eq:eq3}
    \mathcal{H}(W(\bm{\theta})) = \varphi h(W(\bm{\theta})) + {\textstyle{\rho  \over 2}} |h(W(\bm{\theta}))|^2,
\end{equation}
where \(\varphi\) and \(\rho\) are parameters in ALM.

\subsection{Multi-task learning}
A MTL problem can be formulated as a multi-objective optimization problem, where the goal is to simultaneously minimize a vector of loss functions corresponding to different tasks \cite{ caruana1993multitask,miettinen1999nonlinear}:
\begin{equation}\label{eq:eq4}
    \min_{\bm{\theta\in \Theta}} \mathbf{L}(\bm{\theta}) = (\mathcal{L}_1(\bm{\theta}), \dots, \mathcal{L}_{n_p}(\bm{\theta}))^T,
\end{equation}
A solution \( \bm{\theta}_a \) is said to dominate \( \bm{\theta}_b \), denoted as \( \mathbf{L}(\bm{\theta}_a) \prec \mathbf{L}(\bm{\theta}_b) \), if \( \mathcal{L}_k(\bm{\theta}_a) \le \mathcal{L}_k(\bm{\theta}_b) \) holds \(\forall k \in \{1, \dots, n_p\} \), and there exists at least one index \( j \) for which \( \mathcal{L}_j(\bm{\theta}_a) < \mathcal{L}_j(\bm{\theta}_b) \).
\begin{Definition}
    (Pareto Optimality) A solution \( \bm{\theta}^* \in \Theta \) is Pareto optimal if no other solution \( \bm{\theta} \in \Theta \) dominates it, i.e., there is no \( \bm{\theta} \) such that \( \mathbf{L}(\bm{\theta}) \prec \mathbf{L}(\bm{\theta}^*) \).
\end{Definition}
For MTL with conflicting objectives, there not exists a single solution that minimizes all task losses simultaneously. Instead, a set of trade-off solutions exists. The set of all Pareto optimal solutions is called the \emph{Pareto set}, and its image in the objective space is the \emph{Pareto front}.

\section{Framework }

\begin{figure}
    \centering
    \includegraphics[width=1.0\linewidth]{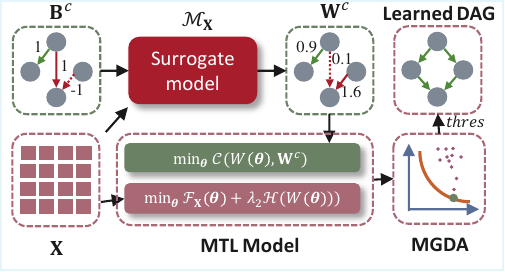}
    \caption{Pipeline. RoaDs constructs a data-driven objective from a continuous score and a knowledge-driven objective using a surrogate model to align imperfect constraints (Red arrows in figure). These are formulated as a MTL problem, which is then solved via the MGDA to recover the final causal graph. }
    \label{fig:pipeline}
\end{figure}

This paper focuses on causal discovery where the available prior knowledge may conflict with the ground-truth graph. And such knowledge can be formally defined as follow.

\begin{Definition}
    (Imperfect constraints.) Let the constraints be encoded in a matrix \( \mathbf{B}^c \in \{0, 1, -1\}^{n_v \times n_v} \), where \( \mathbf{B}^c_{ij} = 1,-1,0 \) signifies positive constraint (\(X_i \to X_j\)), negative constraint (\(X_i \not \to X_j\)), and no constraint. Let \( \mathbf{B}^* \) be the adjacency matrix of the ground-truth graph. \( \mathbf{B}^c \) is considered \textbf{imperfect} if there exist entries \( (i, j) \) such that \( \mathbf{B}^c_{ij} = 1 \) but \( \mathbf{B}^*_{ij} = 0 \), or \( \mathbf{B}^c_{ij} = -1 \) but \( \mathbf{B}^*_{ij} = 1 \).
\end{Definition}

The propose RoaDs refines the imperfect constraints by aligning them with the observational data to against the terrible influence from flawed priors, and resolves the remain conflict between the data-driven and knowledge-driven objectives using a MOO solver, as illustrated in Figure ~\ref{fig:pipeline}.

\subsection{Prior alignment}
The reliability of the prior alignment is fundamentally compromised by the highly non-convex optimization landscape of continuous-based methods. We therefore lay the foundation for RoaDs by first defining a theoretical criterion that acts as the \textit{tool} for overriding flawed priors and simultaneously establishes the \textit{bounds} of its valid application. Note that the subsequent analysis still holds under causal faithfulness and sufficiency assumption.

\paragraph{Tool for alignment.} To enable a uniform re-evaluation of all priors, the negative constraints are firstly converted into positive constraints. Then, a surrogate model \(\bm{\mathcal{M}}=\{\mathcal{M}^{(1)},\dots, \mathcal{M}^{(n_v)}\}\) is employed to test the credibility of different constraints under the dataset \(\mathbf{X}\). For arbitrary \(X_j\), \(\mathcal{M}^{(j)}\) solves \( \mathbb{E}[{X_j}|\Pi _j^{\mathbf{B}}] \) to find the weights of edges point to \(X_j\), which is defined as
\begin{equation}\label{eq:eq5}
    \mathbf{W}_{:,j} = \mathcal{M}^{(j)}_{\mathbf{X}}([\mathbf{B}_{:,j}]_{\neq 0}).
\end{equation}
Thus, when \(n_d \to \infty\), the ground-truth DAG satisfies
\begin{equation}\label{eq:eq6}
    \mathbf{W}^*_{:,j} = \mathcal{M}^{(j)}_{\mathbf{X}}([\mathbf{B}^*_{:,j}]_{\neq 0})=\mathcal{M}^{(j)}_{\mathbf{X}}(\bm{1}_{:,j}).
\end{equation}
For linear case, the surrogate model can be achieved by the consistent parametric regressor, where \(\mathbf{W}_{ij}\) can be represented by the regression coefficients from \(\mathbf{x}_i\) to \(\mathbf{x}_j\). In nonlinear settings, the consistent non-parametric regressor (e,g, random forest) is feasible, and \(\mathbf{W}_{ij}\) can be represented by permutation importance \cite{hastie2009elements}. 

\paragraph{Bounds of alignment.} We introduce a strict assumption about the dependency relations in constraints matrix, that determines whether flawed prior can be aligned.
\begin{Assumption}
    (Consistent constraints.) For the ground-truth DAG \(\mathbf{B^*}\), constraints matrix \(\mathbf{B}^{c}\) is consistent if it satisfies \( \forall X_i \in \bm{V}^{1,0}_j, X_i \Perp X_j| \bm{V}^{1,1}_j  \) and \( \forall X_i \in \bm{V}^{0,1}_j, X_k \in \bm{V}^{1,1}_j, X_i \Perp X_k \), where $\bm{V}^{\alpha,\beta}_j = \{X_k \mid \mathbf{B}^c_{kj}=\alpha, \mathbf{B}^*_{kj}=\beta \}, \: \alpha, \beta \in \{0,1\}$.
\end{Assumption}

\begin{Theorem}\label{thm:thm1}
If \(\mathbf{B}^c\) is consistent, there always exists \(\tau > 0\) such that the probability limit of \(\mathbf{W}^c\) from Eq.~\eqref{eq:eq5} satisfies:
\begin{enumerate}
    \item \(\forall X_i \in \bm{V}^{1,0}_j\), then \(\text{plim}_{n_d \to \infty} \mathbf{W}^c_{ij} < \tau\).
    \item \(\forall X_i \in \bm{V}^{1,1}_j\), then \(\text{plim}_{n_d \to \infty} \mathbf{W}^c_{ij} > \tau\).
\end{enumerate}
\end{Theorem}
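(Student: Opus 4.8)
The plan is to work entirely at the population level ($n_d \to \infty$) and to characterize the probability limit of each surrogate weight $\mathbf{W}^c_{ij}$ in closed form, separating the two regimes of the claim. Since $\mathcal{M}^{(j)}$ is taken to be a consistent regressor, $\text{plim}_{n_d\to\infty}\mathbf{W}^c_{:,j}$ coincides with the population quantity obtained by regressing $X_j$ on the constrained parent set $S_j=\{X_k:\mathbf{B}^c_{kj}\neq 0\}=\bm{V}^{1,1}_j\cup\bm{V}^{1,0}_j$. Abbreviating $P=\bm{V}^{1,1}_j$ (correctly included parents), $F=\bm{V}^{1,0}_j$ (false positives), and $M=\bm{V}^{0,1}_j$ (missed true parents), the true ANM reads $X_j=\beta_P^\top X_P+\beta_M^\top X_M+\epsilon_j$ with every entry of $\beta_P,\beta_M$ nonzero under faithfulness. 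I would first settle the linear/parametric case, where the surrogate returns the population least-squares coefficient $\gamma=\Sigma_{S_jS_j}^{-1}\Sigma_{S_j,j}$, and then transfer the conclusion to the nonlinear case via permutation importance.

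For the linear case I would propose the candidate solution $\gamma=(\beta_P,\mathbf{0})$ and verify it solves the normal equations, i.e. that the residual $r=X_j-\beta_P^\top X_P=\beta_M^\top X_M+\epsilon_j$ is uncorrelated with every regressor in $S_j$. Orthogonality to $X_P$ follows from the second part of the consistency assumption, $X_M\Perp X_P$, which forces $\Sigma_{P,M}=0$ and hence $\Sigma_{P,j}=\Sigma_{PP}\beta_P$. Orthogonality to each false positive $X_i\in F$ is where the first part enters: the conditional independence $X_i\Perp X_j\mid X_P$ yields, in the linear/Gaussian regime, the partial-covariance identity $\Sigma_{ij}=\Sigma_{i,P}\Sigma_{PP}^{-1}\Sigma_{P,j}=\Sigma_{i,P}\beta_P$, whence $\text{Cov}(r,X_i)=\Sigma_{ij}-\beta_P^\top\Sigma_{P,i}=0$. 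Assuming $\Sigma_{S_jS_j}$ is invertible (a genericity condition I would state explicitly, compatible with faithfulness), this makes $\gamma=(\beta_P,\mathbf{0})$ the unique regression solution, so $\text{plim}\,\mathbf{W}^c_{ij}=0$ for $X_i\in F$ while $\text{plim}\,\mathbf{W}^c_{kj}=|\beta_{kj}|>0$ for $X_k\in P$. Existence of the separating threshold is then immediate: any $\tau\in(0,\min_{k\in P}|\beta_{kj}|)$ delivers both inequalities.

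For the nonlinear case I would argue analogously: the consistent nonparametric regressor converges to the conditional mean $\mathbb{E}[X_j\mid X_{S_j}]$, and the permutation importance of a coordinate is asymptotically zero precisely when this conditional mean is invariant to that coordinate. True parents $X_k\in P$ genuinely enter $\mathbb{E}[X_j\mid X_{S_j}]$ (again by faithfulness and nonzero mechanisms), so their importance stays bounded away from zero, whereas a false positive should contribute zero importance. The main obstacle lies exactly here: vanishing permutation importance of $X_i\in F$ requires the full conditional independence $X_i\Perp X_j\mid X_{S_j\setminus\{X_i\}}$, which also conditions on the remaining false positives, while the assumption supplies only $X_i\Perp X_j\mid X_P$. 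I would close this gap by combining faithfulness with the graphoid axioms to upgrade the $P$-conditioned independence to the $S_j\setminus\{X_i\}$-conditioned one, i.e. by checking that enlarging the conditioning set with the other false positives opens no active path from $X_i$ to $X_j$. I expect this d-separation bookkeeping, together with the precise sense in which the empirical permutation importance concentrates on its population value, to be the most delicate step, in contrast to the essentially algebraic linear argument.
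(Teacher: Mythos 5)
Your proposal is correct and its skeleton is the same as the paper's proof: partition the candidate parents into $P=\bm{V}^{1,1}_j$, $F=\bm{V}^{1,0}_j$, $M=\bm{V}^{0,1}_j$, use the first consistency condition to drive the $F$-weights to zero in the limit, use the second condition together with faithfulness to keep the $P$-weights strictly positive, and take $\tau$ below the minimum of finitely many positive limits. The differences are in execution. Your linear case is a fully explicit instantiation via the normal equations ($\Sigma_{P,M}=0$ giving $\Sigma_{P,j}=\Sigma_{PP}\beta_P$, plus residual--$F$ orthogonality from the conditional independence), which is sharper than the paper's generic conditional-expectation argument and matches the linear specialization the paper defers to Appendix C. The step you flag as the main obstacle in the nonlinear case dissolves more easily than you anticipate, and in exactly the way the paper silently assumes: since all the pairwise independencies $X_i\Perp X_j\mid P$ for $X_i\in F$ share the \emph{same} conditioning set, faithfulness turns each into a d-separation given $P$, and d-separation of the set $F$ from $X_j$ given $P$ is by definition the conjunction of these pairwise statements; the Markov property then yields the joint independence $X_j\Perp F\mid P$, so $\mathbb{E}[X_j\mid P\cup F]=\mathbb{E}[X_j\mid P]$ is constant in every $F$-coordinate simultaneously, and weak union recovers your leave-one-out version $X_j\Perp X_i\mid S_j\setminus\{X_i\}$ if you want it. No path-by-path bookkeeping over enlarged conditioning sets is needed --- this one-line upgrade is precisely the content behind the paper's unstated jump from the element-wise assumption to the set statement $X_j\Perp\bm{V}^{1,0}_j\mid\bm{V}^{1,1}_j$. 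One caution on your positivity step: in the nonlinear case the paper does not obtain dependence of $\mathbb{E}[X_j\mid P]$ on each $X_l\in P$ from faithfulness alone; it first uses $P\Perp M$ (the second condition) and $\mathbb{E}[\epsilon_j\mid P]=0$ to rewrite $\mathbb{E}[X_j\mid P]$ as the marginalization $g_j(P)=\int f_j(P,m)\,p(m)\,dm$, and only then excludes perfect cancellation under the integral as a faithfulness-type genericity condition. Your nonlinear sketch should route through this same marginalization, as your linear computation already does via $\Sigma_{P,M}=0$: faithfulness gives $X_l\not\Perp X_j\mid S_j\setminus\{X_l\}$, but statistical dependence alone does not force the conditional \emph{mean} to vary in $x_l$, so the mean-independence gap must be closed by the explicit $g_j$ argument (with its genericity caveat), not by faithfulness as a black box.
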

The detailed proof is provided in Appendix B. Theorem \ref{thm:thm1} demonstrates that if the constraint matrix \(\mathbf{B}^c\) is consistent, the surrogate model \(\bm{\mathcal{M}}\) successfully recovers true edges while simultaneously rejecting the false positive edges that were incorrectly specified in the \(\mathbf{B}^c\). The resulting weight matrix \(\mathbf{W}^c\) from prior alignment will accurately reflect the \textit{partial} ground-truth structure \( \mathbf{B}^*\). And the following discuss in this section is all based on consistent \(\mathbf{B}^c\).

\paragraph{Knowledge-driven optimization objective.}
After prior alignment, \(\mathbf{W}^c =\mathcal{M}_{\mathbf{X}}([\mathbf{B}^c]_{\neq 0})\) can serve for the modeling of knowledge-driven optimization objective. This objective aims to promote the non-parametric weighted adjacency matrix \(W(\bm{\theta})\) towards to the refined DAG encoded in \(\mathbf{W}^c\). 
Intuitively, this purpose can be achieved by minimizing the \(\ell1\) norm of the difference between their binarized structures, an objective that exclusively evaluates discrepancies at the locations specified by the original constraint mask \(\mathbf{B}^c\)
\begin{equation}\label{eq:eq7}
     \min_{\bm{\theta}} \left\| \left[ \mathbb{I}( {W}(\bm{\theta}) - s > 0) - \mathbb{I}(\mathbf{W}^c - \tau > 0) \right] \circ \mathbf{B}^c \right\|_1.
\end{equation}
\(\mathbb{I}(\cdot)\) denotes the Heaviside step function, which maps its input to \(\{0, 1\}\) based on the specified thresholds \(s\) and \(\tau\). The \(\circ \mathbf{B}^c\) localizes the penalty to the constrained entries. However, the discontinuous nature of \(\mathbb{I}(\cdot)\) renders this objective non-differentiable and thus unamenable to standard gradient-based optimization methods.

To facilitate tractable optimization, we introduce a sub-differentiable form of Eq. \eqref{eq:eq7} by substituting the \(\mathbb{I}(\cdot)\) with a continuous sigmoid function \(\sigma(\cdot)\), which acts as a smooth approximation. This yields the following objective
\begin{equation}\label{eq:eq8}
     \min_{\bm{\theta}} \left\| \left[ \sigma( {W}(\bm{\theta}) - s) - \sigma(\mathbf{W}^c - \tau) \right] \circ \mathbf{B}^c \right\|_1,
\end{equation}
and we denote it as \(\mathcal{C}(W(\bm{\theta}), \mathbf{W}^c)\). For the linear case, Eq.~\eqref{eq:eq8} reduces to a more concise form where \(s = \tau\) and the sigmoid function \(\sigma(\cdot)\) is omitted in favor of a parametric regressor.
We can theoretically show that this formulation achieves a lower error bound than fixed thresholding methods, more detailed is provided in Appendix C.

\paragraph{Asymptotic consistency.}
The following theorem establishes that under a single optimization architecture, which integrates the data-driven optimization objective \(\mathcal{F}_\mathbf{X}(\bm{\theta}) + \lambda_2\mathcal{H}(W(\bm{\theta}))\), and our knowledge-regularization term \(\mathcal{C}\), is asymptotically consistent.

\begin{Theorem}\label{thm:thm2}
Consider the continuous optimization problem defined as:
\begin{equation}\label{eq:eq9}
    \min_{\bm{\theta}} \mathcal{F}_\mathbf{X}(\bm{\theta}) + \lambda_2 \mathcal{H}(W(\bm{\theta})) + \lambda_3 \mathcal{C}(W(\bm{\theta}), \mathbf{W}^c).
\end{equation}
Let \(\hat{\bm{\theta}}\) be the optimal solution to the above problem. As the number of samples \( n_d \to \infty \), the graph structure induced by \({W}(\hat{\bm{\theta}})\) converges in probability to the ground-truth DAG \(\mathbf{B}^*\)
\begin{equation}\label{eq:eq10}
    \mathbb{I}({W}(\hat{\bm{\theta}}) > s) \xrightarrow{p} \mathbf{B}^*.
\end{equation}
\end{Theorem}
The detailed proof is provided in Appendix B.

\paragraph{Dilemma under non-consistent constraints. }
According to Theorem \ref{thm:thm2}, if the imperfect constraints $\mathbf{B}^c$ are consistent, the knowledge-driven objective aligns with the data-driven objective in large-sample settings.
However, a significant gap exists between this asymptotic ideal and practical application. 
First, verifying the consistency of given constraints is often intractable, as it would require a relatively accurate understanding of the ground-truth structure $\mathbf{B}^*$. 
Second, the introduction of prior knowledge is to improve the learning accuracy under small sample size, where theoretical guarantees are weakest.

Consequently, the data-driven term \(\mathcal{F}_{\mathbf{X}}(\bm{\theta})+\lambda_2 \mathcal{H}(W(\bm{\theta}))\) and the knowledge-regularization term \(\mathcal{C}(W(\bm{\theta}),\mathbf{W}^c)\) often remain in conflict, further contributing to a highly non-convex optimization landscape \cite{reisach2021beware,  ng2024structure}. This inherent tension necessitates a more sophisticated mechanism to mediate between data and imperfect constraints.

\subsection{Conflict resolution}
We propose a MTL framework designed to balance these two conflicting objectives. Formally, the two optimization tasks are defined as
\begin{equation}\label{eq:eq11}
    \left\{ {\begin{array}{ll}
     & {\mathop {\min }\limits_\mathbf{\bm{\theta}} \mathcal{F}_\mathbf{X}(\mathbf{\bm{\theta}}) +\lambda_2 \mathcal{H}({W(\bm{\theta})}) }\\
     & {\mathop {\min }\limits_\mathbf{\bm{\theta}}\mathcal{C}(W(\bm{\theta}),\mathbf{W}^c).}
    \end{array}} \right.
\end{equation}
Here we assign the equal preference to both tasks, thus, the parameter \(\lambda_3\) for the second task is omitted.

\paragraph{Solve the MTL problem.}
We employ the MGDA to solve MOO problem in Eq.~\eqref{eq:eq11} \cite{sener2018multi}, as it efficiently identifies a single Pareto-stationary point, instead of the entire Pareto front, which is not friendly to decision-makers. Another advantage is that it can adaptively adjust the weights of the two optimization goals, which is crucial for navigating the conflict between data-driven evidence and imperfect constraints. \(\bm{\theta}\) is updated according to
\begin{equation}\label{eq:eq12}
    \bm{\theta}_{t+1} = \bm{\theta}_{t} + \eta \bm{d}_{t},
\end{equation}
where \(\eta \) is the learning rate, and \( \bm{d}_t \) is defined from
\begin{equation}\label{eq:eq13}
\begin{array}{ll}
& (\bm{d}_t, \kappa_t) = \displaystyle
\mathop{\mathrm{argmin}}_{\bm{d}, \kappa} \kappa + {\textstyle{1 \over 2}} \|\bm{d}\|_2^2 \\ 
&s.t. \: \Phi_\alpha(\bm{\theta}_t,\mathbf{X})= \nabla [\mathcal{F}_{\mathbf{X}}(\bm{\theta}_t) + \lambda_2\mathcal{H}(W(\bm{\theta}_t))]^{\top} \bm{d}^{(1)} \leq \kappa \\
& \quad \: \: \Phi_\beta(\bm{\theta}_t,\mathbf{W}^c) = \nabla \mathcal{C}(W(\bm{\theta}_t),\mathbf{W}^c)^{\top} \bm{d}^{(2)} \leq \kappa,
\end{array}
\end{equation}
where \( \bm{d}^{(k)} \) denotes the gradient direction of \(k\)-th task, and \(\kappa \in \mathbb{R}\) is a scalar that indicates the convergence status across all tasks. Furthermore, the following proposition holds \cite{fliege2000steepest}

\begin{Corollary}
If \(\bm{\theta}_t\) is Pareto optimal, then it is a stationary point where \(\bm{d}_t = \bm{0}\) and \(\kappa_t = 0\). If \(\bm{\theta}_t\) is not Pareto optimal, then \(\bm{d}_t\) is a valid descent direction, and \(\kappa_t\) is strictly negative, satisfying
\begin{equation}\label{eq:eq14}
\begin{array}{ll}
     &\kappa_t \leq -\textstyle{1 \over 2 } \| \bm{d}_t \|^2_2 \leq 0  \\
     & \Phi_\alpha(\bm{\theta}_t,\mathbf{X}) \leq \kappa_t, \: \Phi_\beta(\bm{\theta}_t,\mathbf{W}^c) \leq \kappa_t.
\end{array}
\end{equation}
\end{Corollary}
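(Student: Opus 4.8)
The plan is to recognize the subproblem in Eq.~\eqref{eq:eq13} as the Fliege--Svaiter steepest--descent direction problem and to analyze it through convex duality. Writing $\bm{g}_1 = \nabla[\mathcal{F}_{\mathbf{X}}(\bm{\theta}_t) + \lambda_2 \mathcal{H}(W(\bm{\theta}_t))]$ and $\bm{g}_2 = \nabla \mathcal{C}(W(\bm{\theta}_t),\mathbf{W}^c)$, and reading $\bm{d}^{(1)}=\bm{d}^{(2)}=\bm{d}$ as the single common direction the objective minimizes over, the problem minimizes $\kappa + \tfrac12\|\bm{d}\|_2^2$ subject to $\bm{g}_k^{\top}\bm{d} \le \kappa$ for $k\in\{1,2\}$. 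First I would note that $(\bm{d},\kappa)=(\bm{0},0)$ is always feasible, so the feasible region is nonempty; since the objective is strongly convex in $\bm{d}$ and, after eliminating $\kappa$ via $\kappa = \max_k \bm{g}_k^{\top}\bm{d}$, coercive, a unique minimizer $(\bm{d}_t,\kappa_t)$ exists. This reduces the whole statement to reading off the optimality conditions.

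Next I would form the Lagrangian $L = \kappa + \tfrac12\|\bm{d}\|_2^2 + \sum_k \lambda_k(\bm{g}_k^{\top}\bm{d}-\kappa)$ with multipliers $\lambda_k\ge 0$ and write down the KKT system. Stationarity in $\kappa$ gives $\sum_k \lambda_k = 1$ (the multipliers lie on the simplex), and stationarity in $\bm{d}$ gives the MGDA direction $\bm{d}_t = -\sum_k \lambda_k \bm{g}_k$, i.e.\ $\bm{d}_t$ is the negative of the minimum-norm point in $\mathrm{conv}\{\bm{g}_1,\bm{g}_2\}$. Complementary slackness $\lambda_k(\bm{g}_k^{\top}\bm{d}_t-\kappa_t)=0$ will be the workhorse of the key identity.

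The central computation is to evaluate $\|\bm{d}_t\|_2^2 = -\sum_k \lambda_k\,\bm{g}_k^{\top}\bm{d}_t$ (using $\bm{d}_t=-\sum_k\lambda_k\bm{g}_k$) and then invoke complementary slackness: each term has $\lambda_k=0$ or $\bm{g}_k^{\top}\bm{d}_t=\kappa_t$, so $\sum_k \lambda_k \bm{g}_k^{\top}\bm{d}_t = \kappa_t\sum_k\lambda_k=\kappa_t$. Hence $\kappa_t = -\|\bm{d}_t\|_2^2 \le -\tfrac12\|\bm{d}_t\|_2^2 \le 0$, which is the first line of Eq.~\eqref{eq:eq14}; the second line, $\Phi_\alpha=\bm{g}_1^{\top}\bm{d}_t\le\kappa_t$ and $\Phi_\beta=\bm{g}_2^{\top}\bm{d}_t\le\kappa_t$, is simply primal feasibility. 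A case split finishes the argument: if $\bm{\theta}_t$ is not Pareto stationary then $\bm{0}\notin\mathrm{conv}\{\bm{g}_1,\bm{g}_2\}$, so $\bm{d}_t\neq\bm{0}$, giving $\kappa_t=-\|\bm{d}_t\|_2^2<0$ and $\Phi_\alpha,\Phi_\beta<0$, i.e.\ $\bm{d}_t$ is a common descent direction for both objectives; conversely Pareto optimality implies Pareto stationarity ($\bm{0}\in\mathrm{conv}\{\bm{g}_1,\bm{g}_2\}$, the first-order necessary condition), so $(\bm{0},0)$ satisfies the KKT system and, by uniqueness, $\bm{d}_t=\bm{0}$ and $\kappa_t=0$.

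The main obstacle is less the algebra than a logical gap in the statement: the descent-direction clause is guaranteed only when $\bm{\theta}_t$ fails to be \emph{Pareto stationary}, whereas ``not Pareto optimal'' is strictly weaker (a stationary but non-optimal point can exist in this non-convex landscape). I would therefore prove the sharp equivalence $\bm{d}_t=\bm{0}\iff\kappa_t=0\iff\bm{\theta}_t\text{ is Pareto stationary}$ and phrase the second clause in terms of non-stationarity, noting that Pareto optimality is the special case covered by the first clause. The remaining care is to verify that the complementary-slackness step handles inactive constraints cleanly, so that the identity $\kappa_t=-\|\bm{d}_t\|_2^2$ holds without assuming both constraints are active.
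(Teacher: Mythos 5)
Your proof is correct and takes essentially the same route as the paper, which states this corollary by citing Fliege--Svaiter and then deploys exactly your KKT machinery in its Eqs.~(15)--(17): stationarity gives \(\lambda_\alpha + \lambda_\beta = 1\) and \(\bm{d}_t = -\sum_k \lambda_k \bm{g}_k\) (the negated min-norm point of \(\mathrm{conv}\{\bm{g}_1,\bm{g}_2\}\)), your complementary-slackness identity \(\kappa_t = -\|\bm{d}_t\|_2^2\) gives the first line of the bound, and primal feasibility gives the second. Your one substantive addition --- that the second clause is only guaranteed when \(\bm{\theta}_t\) fails to be Pareto \emph{stationary}, since in this nonconvex landscape a stationary but non-optimal point has \(\bm{d}_t = \bm{0}\) --- is a legitimate sharpening of the corollary's loose phrasing (the cited Fliege--Svaiter lemma is indeed stated for Pareto critical points), not a defect in your argument.
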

Corollary 1 clarifies that when \(\bm{d}_t = \bm{0}\), the data-driven and knowledge-driven objectives cannot be improved simultaneously. Conversely, if \(\bm{\theta}_t\) is not optimal, non-zero \(\bm{d}_t\) guarantees that a direction exists to concurrently improve both objectives. According to KKT condition, it satisfies
\begin{equation}\label{eq:eq15}
\begin{aligned}
    & \bm{d}_t= -\lambda_{\alpha}\Phi_\alpha(\bm{\theta}_t,\mathbf{X}) -\lambda_{\beta}\Phi_\beta(\bm{\theta}_t,\mathbf{W}^c) \\
    & s.t. \quad \lambda_{\alpha} + \lambda_{\beta} = 1.
\end{aligned}
\end{equation}
The dual problem of  Eq.~\eqref{eq:eq15} is
\begin{equation}\label{eq:eq16}
\begin{aligned}
    &     \mathop {\min }\limits_{\lambda_{\alpha}}  -\textstyle{1 \over 2}\| \lambda_{\alpha}\Phi_\alpha(\bm{\theta}_t,\mathbf{X}) +(1-\lambda_{\alpha})\Phi_\beta(\bm{\theta}_t,\mathbf{W}^c) \|^2_2. \\
\end{aligned}
\end{equation}
The quadratic program (QP) presented in Eq.~\eqref{eq:eq16} is equivalent to find the minimum-norm vector in the convex hull of the task gradients. And its solution satisfies (for notational simplicity, we omit the variables in \(\Phi(\cdot)\))\cite{lin2019pareto}:
\begin{equation}\label{eq:eq18}
    {\lambda _\alpha } = \left\{ {\begin{array}{{cc}}
1&{\Phi _\alpha ^{\top}{\Phi _\beta} \ge \Phi _\alpha ^{\top}{\Phi _\alpha }}\\
0&{\Phi _\alpha ^{\top}{\Phi _\beta} \ge \Phi _\beta^{\top}{\Phi _\beta}}\\
{{\textstyle{{{{({\Phi _\beta} - {\Phi _\alpha })}^{\top}}{\Phi _\beta}} \over {\left\| {{\Phi _\alpha } - {\Phi _\beta}} \right\|_2^2}}}}&{\text{otherwise}}.
\end{array}} \right.
\end{equation}

\paragraph{Normalization method.}
Data-driven and knowledge-driven objectives have disparate scales, and the latter requires only sparse parameter modifications and is thus easier to optimize. This imbalance biases QP solution towards neglecting the data-driven task \(\lambda_\alpha \approx 0\).  To ensure both objectives contribute meaningfully, we normalize the gradients in the following ways
\begin{equation}\label{eq:e18}
    \begin{array}{ll}
         & \Phi_{\alpha}= \Phi_{\alpha} \cdot  [( \mathcal{F}_{\mathbf{X}}(\bm{\theta}_t) + \lambda_2 \mathcal{H}(W(\bm{\theta}_t)))\cdot\|\Phi_{\alpha}  \|_2]^{-1}   \\
         & \Phi_{\beta}= \Phi_{\beta} \cdot  [\mathcal{C}(W(\bm{\theta}_t),\mathbf{W}^c) \cdot \|\Phi_{\beta}  \|_2 ]^{-1} .
    \end{array}
\end{equation}
We discuss other normalization methods in Appendix D.

\paragraph{Overall algorithm.} Alg.~\ref{alg} details the RoaDs. It performs a warm-up stage (lines 2-4), using only the data-driven objective for \(t_s\) iterations to find an initial solution. Consistent with the mainstream continuous optimization for causal discovery \cite{yu2019dag, fang2023low}, the main loop uses the Adam optimizer and adjusts the parameters of the acyclicity constraint to accelerate convergence (lines 9-11). We analyze time complexity of Alg.~\ref{alg} in Appendix E.
\begin{algorithm}[tb]
\caption{RoaDs}
    \label{alg}
    \textbf{Input}: Dataset \( \mathbf{X} \), Imperfect priors \(\mathbf{B}^c\). 
    
    \textbf{Output}: Optimal weighted matrix \(\mathbf{\hat W}\). 
    
    \begin{algorithmic}[1] %[1] enables line numbers
        \STATE Align the priors as \(\mathbf{W}^c = \mathcal{M}_{\mathbf{X}}([\mathbf{B}^c]_{\neq 0}) \), set \(\bm{\theta}_0=\mathbf{0}\)
        \WHILE{$t\leq t_s$}
        \STATE \(\bm{\mathbf{\theta_{t+1}}} = \bm{\theta}_{t} + \eta(\Phi_{\alpha}(\bm{\theta}_t,\mathbf{X}))\)
        \ENDWHILE
        \WHILE{$t>t_s$ and $h(W(\bm{\theta}_{t}))\neq0$}
        \STATE Normalize \( \Phi_\alpha(\bm{\theta}_t,\mathbf{X}), \Phi_\beta(\bm{\theta}_t,\mathbf{W}^c) \)
        \STATE Compute \(\lambda_{\alpha}\) and \( \bm{d}_t \)  according to Eq.~ (15) and (17)
        \STATE \(\bm{\mathbf{\theta_{t+1}}} = \bm{\theta}_{t} + \eta\bm{d}_t\)
        \IF{\({h}(W(\bm{\theta}_{t}))>c\cdot{h}(W(\bm{\theta}_{t-1}))\) }
        \STATE Update the parameters in \(\mathcal{H}(W(\bm{\theta}_{t}))\)
        \ENDIF
        \ENDWHILE
        \STATE \textbf{return} the weighted matrix \(\hat{\mathbf{W}}= {W}(\bm{\theta}).\)
    \end{algorithmic}
\end{algorithm}

\section{Experiment}

\subsection{Experimental settings}

\paragraph{Graphs and datasets.}
We generate synthetic graphs using Erd\H{o}s--R\'{e}nyi (ER) and Scale-Free (SF). Each graph consists of \( n_v\) nodes and \( kn_v \) edges, denoted as ER-\(k\) or SF-\(k\). \(n_d\) data is then generated based on SEM defined on these graphs. For linear conditions, the weighted adjacency matrix is sampled randomly from \((-2.0, -0.5] \cup [0.5, 2.0)\). Exogenous noise variables are drawn from Gaussian, Exponential, Gumbel, and Uniform, with settings for both equal variance (EV) and non-equal variance (NV) \cite{ng2024structure}. For nonlinear settings, we generate data using either MLP or Gaussian Processes (GP).

\paragraph{Imperfect constraints usage.}
We sample \(p_a \cdot kn_v\) true edges from the ground-truth graph as positive constraints and \(p_c \cdot p_a \cdot kn_v\) non-existent edges as negative constraints. Then, we randomly select a fraction \(p_b\) of sampled edges and flip their values to simulate imperfect constraints (i.e., a positive constraint is changed to negative, and vice versa).

\paragraph{Baselines and metrics.}
We compare RoaDs against baselines from both continuous and combinatorial methods. The former is founded on GOLEM (linear) and NOTEARS-MLP (nonlinear) \cite{zheng2020learning, ng2020role}. We compare with their extensions under priors, including NTS-B (a type of algorithms incorporating the priors as hard constraints, \cite{sun2023nts,wang2024incorporating}) and ECA \cite{chen2025continuous}. The latter includes PC-stable and LiNGAM \cite{kalisch2007estimating, shimizu2011directlingam}. Performance is evaluated using the F1-score and the Structural Hamming Distance (SHD) against the ground-truth DAGs \cite{zhang2021gcastle}.

\paragraph{Implementation details.}
We set \(s=0.3\) and \(\tau=0.01\) in Eq.~\eqref{eq:eq8}, and the other parameters are default in GOLEM and NOTEARS-MLP. Each experiment was repeated ten times. More details about experimental implementation and code link can be referred in Appendix F.

\subsection{Results and analysis}
\begin{table*}[t]
\centering
\begin{tabular}{l*{8}{c}}
\toprule
\multirow{2}{*}[\multirowoffset]{Method} & \multicolumn{2}{c}{Gauss (ER)} & \multicolumn{2}{c}{Exp (ER)} & \multicolumn{2}{c}{Gauss (SF)}& \multicolumn{2}{c}{Exp (SF)} \\
\cmidrule(lr){2-3} \cmidrule(lr){4-5} \cmidrule(lr){6-7} \cmidrule(lr){8-9}
 & F1($\uparrow$) & SHD($\downarrow$) & F1($\uparrow$) & SHD($\downarrow$) & F1($\uparrow$) & SHD($\downarrow$)  & F1($\uparrow$) & SHD($\downarrow$) \\
\cmidrule(lr){1-9}
PC-stable   & 0.397 & 29.5 & 0.381 & 30.2 & 0.374 & 30.8 & 0.403 & 29.3\\
LiNGAM       & 0.220 & 47.1 & 0.267 & 46.3 & 0.204 & 50.9 & 0.272 & 47.7 \\
NTS-B          & 0.787 & 13.2 & 0.745 & 16.6 & 0.734 & 15.9 & 0.681 & 19.7 \\
ECA          & 0.661 & 24.0 & 0.638 & 25.4 & 0.608 & 26.6 & 0.569 & 29.1 \\
RoaDs (Ours)       & \textbf{0.821} & \textbf{11.4} & \textbf{0.777} & \textbf{14.6} & \textbf{0.750} & \textbf{15.2} & \textbf{0.734} & \textbf{14.1} \\
\cmidrule(lr){1-9}
GOLEM-EV     & 0.807 & 12.1 & 0.728 & 17.4 & 0.701 & 18.2 & 0.672 & 20.0 \\
\bottomrule
\end{tabular}
\caption{Comparison under EV noise (gauss and exp) for linear SEM on the ER-2 and SF-2 ($n_v = 20$, $n_d=2n_v$, $p_a,p_b,p_c = 0.3,0.3,1$) (\(\uparrow\): higher is better, \textbf{bold} indicates the best performance) .}
\label{tab:tab1}
\end{table*}

\begin{table*}[h]
\centering
\begin{tabular}{l*{8}{c}}
\toprule
\multirow{2}{*}[\multirowoffset]{Method} & \multicolumn{2}{c}{Gauss (ER)} & \multicolumn{2}{c}{Exp (ER)} & \multicolumn{2}{c}{Gauss (SF)}& \multicolumn{2}{c}{Exp (SF)} \\
\cmidrule(lr){2-3} \cmidrule(lr){4-5} \cmidrule(lr){6-7} \cmidrule(lr){8-9}
 & F1($\uparrow$) & SHD($\downarrow$) & F1($\uparrow$) & SHD($\downarrow$) & F1($\uparrow$) & SHD($\downarrow$)  & F1($\uparrow$) & SHD($\downarrow$) \\
\cmidrule(lr){1-9}
PC-stable   & \textbf{0.397}  & \textbf{29.5} & 0.381& 30.2 & 0.374 & 30.8 & \textbf{0.403} & \textbf{29.3}\\
LiNGAM       & 0.142 &51.0  & 0.185 &48.4  & 0.124 & 52.9 & 0.161 & 49.1 \\
NTS-B          & 0.300 & 36.6& 0.360 &  32.9 & 0.318 & 33.6 & 0.300 & 35.4 \\
ECA          &  0.362& 38.4& 0.391 & 36.4   & 0.330 & 39.2 & 0.365 & 36.8 \\
RoaDs (Ours)        & 0.384 & 32.7 &\textbf{ 0.434} & \textbf{30.0} & \textbf{0.402} & \textbf{30.2} & 0.370 & 33.2 \\
\cmidrule(lr){1-9}
GOLEM-NV     & 0.301  & 35.4   & 0.336 & 33.9 & 0.281 & 35.0 & 0.371 & 36.3 \\
\bottomrule
\end{tabular}
\caption{Comparison under NV noise for linear SEM on the ER-2 and SF-2 ($n_v = 20$, $n_d=2n_v$, $p_a,p_b,p_c = 0.3,0.3,1$).}
\label{tab:tab2}
\end{table*}

\begin{table*}[h]
\centering
\begin{tabular}{l*{8}{c}}
\toprule
\multirow{2}{*}[\multirowoffset]{Method} & \multicolumn{2}{c}{MLP (ER)} & \multicolumn{2}{c}{GP (ER)} & \multicolumn{2}{c}{MLP (SF)}& \multicolumn{2}{c}{GP (SF)} \\
\cmidrule(lr){2-3} \cmidrule(lr){4-5} \cmidrule(lr){6-7} \cmidrule(lr){8-9}
 & F1($\uparrow$) & SHD($\downarrow$) & F1($\uparrow$) & SHD($\downarrow$) & F1($\uparrow$) & SHD($\downarrow$)  & F1($\uparrow$) & SHD($\downarrow$) \\
\cmidrule(lr){1-9}
PC-stable   & 0.343 & 31.4 & 0.323 & 33.7 & 0.370 & 32.7 & 0.303 & 35.9\\
LiNGAM       & 0.172 & 39.6 & 0.065 & 37.2 & 0.171 & 38.6 & 0.079 & 37.0 \\
NTS-B        & 0.321 & 113.2 & 0.277 & 118.7 & 0.324 & 110.1 & 0.264 & 119.9 \\
ECA          & 0.344 & 107.4 & 0.272 & 119.0 & 0.335 & 106.7 & 0.271 & 118.0 \\
RoaDs (Ours)       & \textbf{0.578} & \textbf{25.9} & \textbf{0.358} & \textbf{32.4} & \textbf{0.520} & \textbf{28.1} & \textbf{0.347} & \textbf{32.9} \\
\cmidrule(lr){1-9}
NOTEARS-MLP     & 0.489 & 31.9 & 0.057 & 35.9 & 0.445 & 30.3 & 0.054 & 35.7 \\
\bottomrule
\end{tabular}
\caption{Comparison under nonlinear SEM on the ER-2 and SF-2 ($n_v = 20$, $n_d=2n_v$, $p_a,p_b,p_c = 0.3,0.3,1$).}
\label{tab:tab3}
\end{table*}

\paragraph{Linear SEM (EV).} As demonstrated in Table \ref{tab:tab1}, imperfect constraints severely mislead the causal discovery, and LiNGAM introduces too many spurious edges to satisfy them. The performance of PC is hampered by the small sample size, which causes less reliable conditional independence tests. The strong performance of continuous optimization methods (ECA, NTS-B, and RoaDs) is attributed to the less non-convex optimization landscape of the linear EV setting \cite{reisach2021beware}. However, NTS-B and ECA rigidly adhere to potentially flawed priors, but RoaDs can harness the benefits of correct priors while resisting misleading ones via prior alignment, resulting in an average F1-score improvement of approximately 4.4\% and 17.0\% decrease in SHD compared to GOLEM-EV.

\paragraph{Linear SEM (NV).}As shown in Table~\ref{tab:tab2}, the linear NV setting introduces a highly non-convex optimization landscape \cite{ng2024structure}, causing a sharp performance decline for most continuous optimization methods. In contrast, PC remains robust as it is less sensitive to noise variances. Notably, our RoaDs maintains performance competitive with PC, demonstrating its superior resilience in navigating this challenging scenario.

\paragraph{Nonlinear SEM.}Under nonlinear conditions (Table \ref{tab:tab3}), PC remains robust due to its non-parametric nature, whereas LiNGAM fails as linearity assumption is violated. NTS-B and ECA, exhibit a significant decline in SHD. They are forced to incorporate an excessive number of edges (over 100) to minimize the least-squares loss while simultaneously adhering to flawed constraints. In this challenging environment, RoaDs achieves remarkable performance, with its F1-score surpassing ECA by an average of 14.5\% and NTS-B by 15.4\%. Furthermore, RoaDs demonstrates its resilience in settings with GP noise, while NOTEARS-MLP achieves a F1-score below 0.1, which indicates a near-complete failure to identify the correct causal edges.

Further comparisons are provided in Appendix~G, covering different noise types (Gumbel and Normal), numbers of variables (\(n_v\)), numbers of edges (\(k\)), and sample sizes (\(n_d\)).

\begin{figure}[htb]
    \centering
    \includegraphics[width=0.9\linewidth]{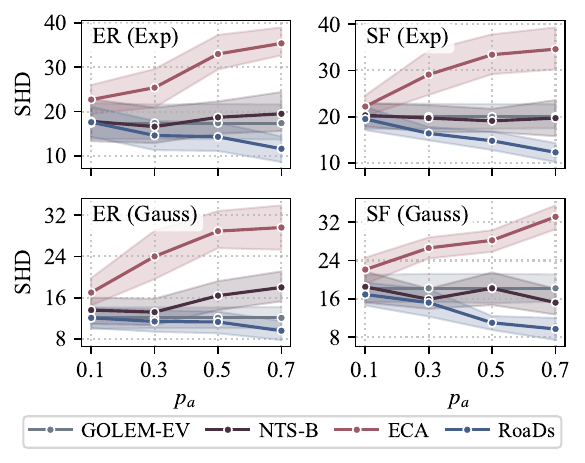}
    \caption{Influence of positive constraints rate \(p_a\) for continuous methods (\(n_v=20,n_d=2n_v,p_b,p_c=0.3,1\)).}
    \label{fig:fig3}
\end{figure}

\begin{figure}[htb]
    \centering
    \includegraphics[width=0.9\linewidth]{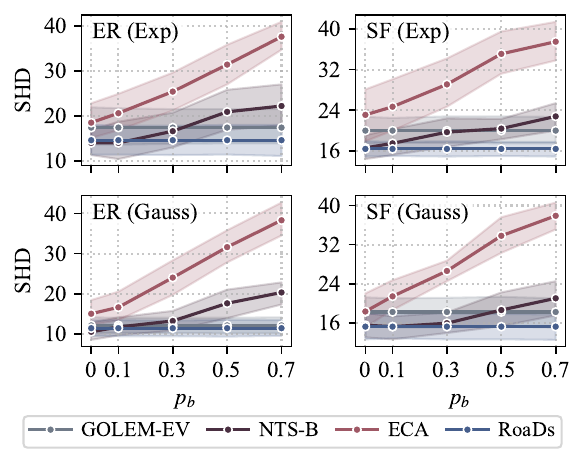}
    \caption{Influence of imperfect constraints rate \(p_b\)  for continuous methods (\(n_v=20,n_d=2n_v,p_a,p_c=0.3,1\)).}
    \label{fig:fig4}
\end{figure}

\paragraph{Influence of constraints.} Figure \ref{fig:fig3} and \ref{fig:fig4} investigate the influence of both the quantity and quality of prior knowledge on continuous optimization methods. When \(p_a\) increases, as more imperfect constraints is introduced, ECA exhibits overfitting to the flawed priors. NTS-B performs comparably to GOLEM-EV. In stark contrast, RoaDs demonstrates the ability to effectively filter this information, as its SHD decreases substantially with a higher \(p_a\). When increasing the error rate \(p_b\) within the constraints, ECA proves highly sensitive, with its SHD increasing dramatically. NTS-B shows a more gradual performance decline. Our proposed RoaDs distinguishes itself by maintaining a stable and low SHD even at high error rates. This superior robustness stems from its prior alignment mechanism, which mitigates the impact of priors that are inconsistent with the observation data.

More detailed comparison is provided in Appendix~H, including results under other settings and sensitivity for \(p_c\).

\begin{figure}[htb]
    \centering
    \includegraphics[width=0.78\linewidth]{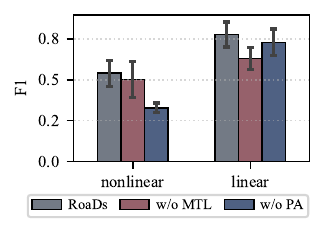}
    \caption{Ablation study on ER-2 (\(n_v=20\), \(n_d=2n_v\),$p_a,p_b,p_c = 0.3,0.3,1$, PA indicates prior alignment).}
    \label{fig:fig5}
\end{figure}

\paragraph{Ablation study. }
Figure~\ref{fig:fig5} presents our ablation study on the contributions of Prior Alignment (PA) and Multi-Task Learning (MTL). In the linear case, MTL is more critical: its removal reduces the F1-score by 14.8\%, whereas removing PA causes only a 4.9\% drop. This suggests that in relatively convex landscapes, effective optimization strategy is more important than the objective's formulation. Conversely, in the highly non-convex nonlinear case, PA becomes dominant. Its removal leads to a 21.1\% F1-score decrease, compared to just 3.8\% for MTL. This indicates that in such complex landscapes, establishing a well-formed optimization objective is more fundamental than the subsequent optimization strategy.

Further evaluation on other components, including different normalization methods, various surrogate models, and running time comparison, is provided in Appendix I.

\begin{table}[ht]
\centering
\begin{tabular}{l*{4}{c}}
\toprule
Method & F1 & SHD & Precison &Recall \\
\cmidrule(lr){1-5}
PC\_stable & 0.333 & 14.0 & 0.384 &0.291  \\
LiNGAM & - & - & - &-  \\
NTS-B   & 0.384& 14.0  & 0.500 &0.235 \\
ECA   &  0.414 & 17.0 &0.500 & \textbf{0.353} \\
RoaDs  & \textbf{0.480} & \textbf{12.0} &0.750 & \textbf{0.353} \\
\cmidrule(lr){1-5}
No Priors & 0.364 & 13.0 & \textbf{0.800} & 0.235  \\
\bottomrule
\end{tabular}
\caption{Comparison under Saches dataset (\(\rm{thres} = 0.1\)).}
\label{tab:tab4}
\end{table}

\paragraph{Case study.} We evaluated our method on the Sachs dataset \cite{sachs2005causal}, a widely-used benchmark for causal discovery from human protein-signaling networks. For our experiments, we used its 853 sample observational data (11 variables) and simulated imperfect domain knowledge with parameters \(p_a,p_b,p_c=0.3,0.3,1\). As summarized in table~\ref{tab:tab4} (for a threshold of 0.1), RoaDs significantly outperforms all competing approaches by achieving the highest F1-score and lowest SHD. Detailed DAG visualizations and results for other thresholds are provided in Appendix J.

\section{Conclusion}
We proposed RoaDs, a novel framework that utilizes the dataset to align priors and employs MTL to resolve the conflict between data-driven and knowledge-driven optimization goals under imperfect structural constraints. Empirical evaluation demonstrates the robustness of RoaDs across both linear (EV and NV) and nonlinear SEMs, as well as its effectiveness under various noise types and constraint rates.

However, this work use MGDA to randomly identify the solution on Pareto front, which may not align with decision-maker's specific preferences. Therefore, future work could focus on developing a Pareto set learning model to generate DAGs adaptable to arbitrary preferences \cite{navonlearning}, or extending RoaDs to incorporate interventional data.

\newpage
\section{Acknowledgments}
This work was supported by the Research Grants Council of the Hong Kong Special Administrative Region, China (GRF Project No. CityU 11215723), by National Natural Science Foundation of China (Project No: 62276223), and by Young Scientists Fund of the National Natural Science Foundation of China (Project No: 52402453).

\section{Appendix}

\subsection{A. Related works for Causal Discovery.}
The field of causal discovery has a long history, initially developing under the name of Bayesian Network Structure Learning \cite{pearl2009causality, koller2009probabilistic}. A primary challenge in early methods was the inability to distinguish causal relationships from statistical dependencies under the standard assumptions of causal sufficiency and faithfulness \cite{glymour2019review}. Consequently, these methods could only identify a MEC, meaning the direction of edges could not be oriented without imposing stricter assumptions.

Traditionally, these approaches are categorized into three families: constraint-based, score-based, and hybrid methods \cite{kitson2023survey}. Constraint-based methods, such as the PC algorithm and its variants, use a series of conditional independence tests to learn the graph's skeleton and orient v-structures, resulting in a Completed Partially Directed Acyclic Graph (CPDAG) \cite{kalisch2007estimating, colombo2014order,le2016fast}. Score-based methods frame causal discovery as a combinatorial optimization problem. They employ search strategies, such as greedy search, evolutionary algorithms, or exact search—within the space of DAGs \cite{larranaga1996structure,de2011efficient, bartlett2017integer,  constantinou2022effective}, CPDAGs \cite{chickering2002optimal,chen2016enumerating,ramsey2017million}, or topological orderings \cite{cooper1992bayesian,teyssier2005ordering,yuan2013learning,scanagatta2015learning} to find the graph that best fits the data. Hybrid methods synergize these two approaches, typically using constraint-based techniques to prune the search space (e.g., identifying parent candidates) before applying a score-based method for final structure optimization \cite{tsamardinos2006max,constantinou2022effective}.

For continuous data, a significant advancement came from methods like LiNGAM \cite{kalisch2007estimating, shimizu2011directlingam}, ANM\cite{hoyer2008nonlinear, buhlmann2014cam}, and PNL \cite{zhang2009identifiability}. By assuming specific functional forms (e.g., linear or non-linear) and non-Gaussian noise, these methods can leverage the resulting model asymmetry to achieve full DAG identification \cite{vowels2022d}.

More recently, the field has seen a surge of interest in continuous optimization techniques. First-order methods reformulate the acyclicity constraint in a differentiable manner, allowing the use of gradient-descent algorithms to find a solution in a continuous space \cite{zheng2018dags, yu2019dag, ng2020role, lachapellegradient,wei2020dags,  yu2021dags, bello2022dagma}. Despite challenges like navigating complex, non-convex landscapes, these methods have achieved highly accurate results. This has spurred numerous improvements, such as extensions for large-scale networks \cite{lopez2022large}, weakened causal sufficiency \cite{cai2023causal,bhattacharya2021differentiable}, weakened causal faithfulness \cite{DBLP:conf/nips/NgZZZ21},  interventional data \cite{brouillard2020differentiable, ke2023neural,DBLP:conf/iclr/DaiNS0LDSZ25}, low-rank settings \cite{DBLP:journals/tnn/FangZZLCH24}, spatial dataset\cite{SunSLP23}, and heterogeneous data \cite{huang2020causal,zhou2025information}. Building on this, second-order methods, leverage the hessian matrix to infer the causal ordering \cite{DBLP:conf/icml/RollandCK0JSL22,reisach2023scale}, with similar work also being explored using diffusion models \cite{sanchezdiffusion}. Separately, other researchers have focused on sampling-based paradigms \cite{charpentierdifferentiable,zhang2023boosting}, employing techniques like Bayesian Optimization \cite{duongcausal}, and reinforcement learning \cite{zhucausal}, to search for the causal graph.

\subsection{B. Proofs.}

\begin{Theorem}
If \(\mathbf{B}^c\) is consistent, there always exists \(\tau > 0\) such that the probability limit of \(\mathbf{W}^c\) from Eq.~\eqref{eq:eq5} satisfies:
\begin{enumerate}
    \item \(\forall X_i \in \bm{V}^{1,0}_j\), then \(\text{plim}_{n_d \to \infty} \mathbf{W}^c_{ij} < \tau\).
    \item \(\forall X_i \in \bm{V}^{1,1}_j\), then \(\text{plim}_{n_d \to \infty} \mathbf{W}^c_{ij} > \tau\).
\end{enumerate}
\end{Theorem}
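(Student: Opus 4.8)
The plan is to translate the surrogate weights $\mathbf{W}^c_{ij}$ into population-level quantities via the consistency of the underlying regressor, and then read off their probability limits directly from the two clauses of Assumption~1. First I would fix a target $X_j$ and observe that, after the negative-to-positive conversion, $\mathcal{M}^{(j)}$ regresses $X_j$ on exactly the constrained candidate set $S_j = \bm{V}^{1,0}_j \cup \bm{V}^{1,1}_j$, while the true parent set splits as $\bm{V}^{1,1}_j \cup \bm{V}^{0,1}_j$ (the genuine parents that are, respectively, constrained and omitted from $S_j$). Since the regressor is consistent, $\mathbf{W}^c_{:,j}$ converges in probability to its population target: in the linear case the vector of population regression coefficients of $X_j$ on $\{X_i : i\in S_j\}$, and in the nonlinear case the permutation-importance scores attached to the minimiser $\mathbb{E}[X_j \mid S_j]$. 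It then suffices to evaluate these limits on $\bm{V}^{1,0}_j$ and $\bm{V}^{1,1}_j$.

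For the false positives $X_i \in \bm{V}^{1,0}_j$ I would use clause one, $X_i \Perp X_j \mid \bm{V}^{1,1}_j$. In the linear case, residualise $X_j$ and the whole block $\{X_i : i\in\bm{V}^{1,0}_j\}$ against $\bm{V}^{1,1}_j$; conditional independence makes each residualised false positive uncorrelated with the residual $r_j$ of $X_j$, so the relevant cross-covariances vanish entrywise and the Frisch--Waugh--Lovell representation yields zero coefficients for the entire block, giving $\text{plim}\,\mathbf{W}^c_{ij}=0$. In the nonlinear case the same clause should render $\mathbb{E}[X_j\mid S_j]$ independent of $X_i$, so permuting $X_i$ cannot change the prediction and its importance limit is $0$. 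For the true positives $X_i\in\bm{V}^{1,1}_j$ I would invoke clause two, $X_i\Perp X_k$ for $X_i\in\bm{V}^{0,1}_j,\,X_k\in\bm{V}^{1,1}_j$: the omitted true parents are uncorrelated with the constrained ones, so dropping $\bm{V}^{0,1}_j$ from the regressor induces no omitted-variable bias, and the surrogate recovers the genuine causal contribution of $X_i$. Together with faithfulness this gives $\text{plim}\,\mathbf{W}^c_{ij}=c_{ij}>0$.

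I would then close the argument by choosing the threshold after the limits are known: set $\tau = \tfrac12 \min_{j}\min_{i\in\bm{V}^{1,1}_j} c_{ij}$, which is strictly positive because only finitely many edges are involved, and which strictly exceeds the false-positive limit $0$. This simultaneously satisfies clauses (1) and (2) for every target $j$.

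I expect the main obstacle to be the false-positive step under the nonlinear surrogate: clause one conditions only on $\bm{V}^{1,1}_j$, whereas vanishing permutation importance in the joint model requires $X_i$ to be uninformative given the full set $S_j\setminus\{X_i\}$, and conditioning on the additional false positives could in principle reopen a collider path. Reconciling these — presumably through a d-separation argument that the extra false positives are non-colliders relative to $X_i$ and $X_j$ under faithfulness — is the delicate point; the linear case sidesteps it because only the pairwise cross-covariances with $r_j$ enter. A secondary care point is securing strict positivity (not merely a nonzero population effect) of the true-positive limits, which is precisely what the faithfulness assumption is meant to supply.
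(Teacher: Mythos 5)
Your proposal follows essentially the same route as the paper's proof: clause one of the assumption is used to show the conditional expectation $\mathbb{E}[X_j \mid \bm{V}^{1,1}_j \cup \bm{V}^{1,0}_j]$ reduces to $\mathbb{E}[X_j \mid \bm{V}^{1,1}_j]$ so that false-positive limits vanish, clause two plus faithfulness yields strictly positive limits for the constrained true parents via marginalization over $\bm{V}^{0,1}_j$ (your ``no omitted-variable bias'' reading), and $\tau$ is chosen below the finite minimum of the true-positive limits. The nonlinear subtlety you flag --- pairwise independence $X_i \Perp X_j \mid \bm{V}^{1,1}_j$ versus joint uninformativeness of $X_i$ given all of $S_j \setminus \{X_i\}$ --- is not actually resolved in the paper either, which silently reads Assumption~1 blockwise as $X_j \Perp \bm{V}^{1,0}_j \mid \bm{V}^{1,1}_j$; your Frisch--Waugh--Lovell treatment of the linear case is, if anything, more careful than the published argument.
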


\begin{proof}
The candidate parents \(\Pi^{\mathcal{G}_c}_j\) of \(X_j\) can be partitioned into two disjoint \(\bm{V}^{1,1}_j\) and \(\bm{V}^{1,0}_j\). The surrogate model aims
\begin{equation}\label{eq:eq1}
    \mathbb{E}[X_j \mid \Pi_{j}^{\mathcal{G}_c}] = \mathbb{E}[X_j \mid \bm{V}^{1,1}_j \cup \bm{V}^{1,0}_j ].
\end{equation}
% Here \(\mathbf{V}^{(1,0)}_j = \mathbf{V} \setminus \Pi_j^{\mathcal{G}} \). Thus, \(\forall X_i \in \mathbf{V}^{(1,0)}_j\) can be divided into four parts: 1. \(X_i\) is the descendant of \(\mathbf{V}^{(1,1)}_j\), according to Markov locally dependence [], \(X_i \Perp X_j\) holds as these nodes has been d-seperate. 2. \(X_i\) is the descendant of \(\mathbf{V}^{(0,1)}_j\), however, these nodes satisfied  \(X_i \not \Perp X_j\), which violate according to the assumption 1 in the definition of perfect constraints. 3). \(X_i\) is the descendent of \(X_j\). However, if we using the descendants nodes in equation 5 (here we take \(X_k\), which is the son node of \(X_j\) as example.). If we take the \(X_k\) in regressor, as in true SEM satisfied: \(\mathbf{x}_k = f_k(\mathbf{x}_j) + \epsilon_j\), which implies \(\mathbf{x}_j = f^{-1}_k(\mathbf{x}_k-\epsilon_k)\). If we using the \(X_k\) to regressor \(X_j\), the following equation holds \(\mathbf{x}_j = f_j(\mathbf{x_k})+\epsilon_j\), thus, \(\epsilon_j\) can also be represented as:
% \begin{equation}
%     \epsilon_j = \mathbf{x}_j- \mathbb{E}(\mathbf{x}_j|\mathbf{x}_k)  = f^{-1}_k(\mathbf{x}_k-\epsilon_k) - \mathbb{E}(f^{-1}_k(\mathbf{x}_k-\epsilon_k)|\mathbf{x}_k)
% \end{equation}
% which implied that the noise \(\epsilon_j\) is dependent with the regression variable \(X_k\), which break the identified of the ANM models. 4). Other variables that \(X_j \Perp X_i| \emptyset\), these nodes will not effect the regression of \(X_j\).
From the first condition of constraint constraints, we have that \(X_j \Perp \bm{V}^{1,0}_j \mid \bm{V}^{1,1}_j  \), thus, Eq.~\eqref{eq:eq1} can be simplified as
\begin{equation}\label{eq:eq2}
    \mathbb{E}  [X_j \mid \bm{V}^{1,1}_j \cup \bm{V}^{1,0}_j ] = \mathbb{E} [X_j \mid \bm{V}^{1,1}_j],
\end{equation}
which shows that the conditional expectation function is functionally independent of all variables in \(\bm{V}^{1,0}_j\). Since the regressor is consistent, thus, the estimated upper bound converges in probability to the truly weights \(\mathbf{W}^*_{ij} = 0\), i.e.
\begin{equation}\label{eq:eq3}
    \forall \: X_i \in \bm{V}^{1,0}_j, \quad \text{plim}_{n_d \to \infty} \mathbf{W}^p_{ij} = 0.
\end{equation}
This completes the first part of the proof.

Consider the second conclusion. As the \(X_j\) is generated from \(X_j = f_j(\bm{V}^{1,1}_j\cup\bm{V}^{0,1}_j ) + \epsilon_j\), where the surrogate model on \(X_j\) aims to
\begin{equation}\label{eq:eq4}
    \begin{aligned}
    &\mathbb{E}[f_j(\bm{V}^{1,1}_j\cup\bm{V}^{0,1}_j ) + \epsilon_j | \bm{V}^{1,1}_j] \\
    &= \mathbb{E}[(\bm{V}^{1,1}_j\cup\bm{V}^{0,1}_j ) \mid \bm{V}^{1,1}_j] + \mathbb{E}[\epsilon_j \mid \bm{V}^{1,1}_j].
\end{aligned}
\end{equation}
Since \(\epsilon_j \Perp \Pi_j^{\mathcal{G}^*}\) and \(\bm{V}^{1,1}_j \subseteq \Pi_j^{\mathcal{G^*}}\), we have \(\epsilon_j \Perp \bm{V}^{1,1}_j\), which implies \(\mathbb{E}[\epsilon_j \mid \bm{V}^{1,1}_j] = \mathbb{E}[\epsilon_j] = 0\). Thus, Eq.~\eqref{eq:eq4} can be simplified as
\begin{equation}\label{eq:eq5}
    \mathbb{E}[(\bm{V}^{1,1}_j\cup\bm{V}^{0,1}_j ) \mid \mathbf{V}^{1,1}_j].
\end{equation}
Now, we invoke second condition of consistent constraints, which states that \(\bm{V}^{1,1}_j \Perp \bm{V}^{0,1}_j\). This allows us to rewrite the conditional expectation in Eq. \eqref{eq:eq5} as an integral over the marginal distribution of \(\bm{V}^{0,1}_j\), here we denote it as
\begin{equation}\label{eq:eq6}
    g_j(\bm{V}^{1,1}_j) = \int_{\mathcal{X}_{\bm{V}^{0,1}_j}} f_j(\bm{V}^{1,1}_j, \bm{V}^{0,1}_j) p(\bm{V}^{0,1}_j) d\bm{V}^{0,1}_j,
\end{equation}
where \(p(\bm{V}^{0,1}_j)\) is the marginal probability density function of the variables in \(\bm{V}^{0,1}_j\).

For any \(X_l \in \bm{V}^{1,1}_j\), the function \(f_j\) is depend on \(X_l\) in \(\mathcal{G}^*\) (otherwise \(X_l\) would not be a direct cause), and \(g_j\) is a "marginalized" version of \(f_j\). Barring pathological cases where the effect of \(X_l\) is perfectly canceled out by the integration over \(\bm{V}^{0,1}_j\) for all values of \(\bm{V}^{1,1}_j\) (a condition generally excluded by faithfulness assumptions in causal discovery), the function \(g_j\) will also depend on \(X_l\). Therefore, \(g_j\) is not a constant function with respect to any \(X_l \in \mathbf{V}^{1,1}_j\). According to consistency of the regressor
\begin{equation}\label{eq:eq7}
    \forall X_l \in \bm{V}^{1,1}_j, \quad \text{plim}_{n \to \infty} \mathbf{W}^p_{lj} > 0.
\end{equation}
This completes the second part of the proof.

Based on the Eq. \eqref{eq:eq3} and \eqref{eq:eq7}, let
\begin{equation}\label{eq:eq8}
    \tau_{max} = \min_{(l,j) \in \{\mathbf{B}^* = 1\} \cap \{\mathbf{B}^c=1\}} \{\text{plim}_{n_d \to \infty} \mathbf{W}^p_{lj}\}.
\end{equation}
Since this minimum is taken over a finite set of positive numbers, \(\tau_{max} > 0\). Thus, we can choose any threshold \(\tau\) such that \(0 < \tau < \tau_{\max}\). This threshold will asymptotically separate the two sets of edges perfectly.
\end{proof}
\begin{Theorem}\label{thm:thm2}
Consider the continuous optimization problem defined as:
\begin{equation}\label{eq:eq9}
    \min_{\bm{\theta}} \mathcal{F}_\mathbf{X}(\bm{\theta}) + \lambda_2 \mathcal{H}(W(\bm{\theta})) + \lambda_3 \mathcal{C}(W(\bm{\theta}), \mathbf{W}^c).
\end{equation}
Let \(\hat{\bm{\theta}}\) be the optimal solution to the above problem. As the number of samples \( n_d \to \infty \), the graph structure induced by \({W}(\hat{\bm{\theta}})\) converges in probability to the ground-truth DAG \(\mathbf{B}^*\)
\begin{equation}\label{eq:eq10}
    \mathbb{I}({W}(\hat{\bm{\theta}}) > s) \xrightarrow{p} \mathbf{B}^*.
\end{equation}
\end{Theorem}

\begin{proof}
The proof analyzes the first-order necessary conditions for optimality when \(n_d \to \infty\). A parameter vector \(\hat{\bm{\theta}}\) is an optimal solution only if the zero vector is contained in the sub-gradient of the population objective function \(\mathbb{E}[\mathcal{L}(\bm{\theta})]\) evaluated at \(\hat{\bm{\theta}}\), where \(\mathbb{E}[\mathcal{L}(\bm{\theta})]\) is defined as
\begin{equation}\label{eq:eq11}
    \mathbb{E}[\mathcal{L}(\bm{\theta})] = \mathbb{E}[\mathcal{F}_\mathbf{X}(\bm{\theta})] + \lambda_2 \mathcal{H}({W}(\bm{\theta})) + \lambda_3 \mathbb{E}[\mathcal{C}({W}(\bm{\theta}), \mathbf{W}^c)].
\end{equation}
The first-order optimality condition is expressed using the sub-gradient \(\partial_{\bm{\theta}}\)
\begin{equation} \label{eq:eq12}
    \mathbf{0} \in \partial_{\bm{\theta}} \mathbb{E}[\mathcal{L}(\hat{\bm{\theta}})].
\end{equation}
Using the chain rule, we can express the sub-gradient with respect to \(\bm{\theta}\) as
\begin{equation}\label{eq:eq13}
\begin{array}{cc}
     & \partial_{\bm{\theta}} \mathbb{E}[\mathcal{L}] = \nabla_{\bm{\theta}} \mathbb{E}[\mathcal{F}_\mathbf{X}] + \frac{\partial {W}(\bm{\theta})^\top}{\partial \bm{\theta}} ( \lambda_2 \nabla_{{W}} \mathcal{H}({W}(\bm{\theta}))) \\
     & + \lambda_3 \partial_{{W}} \mathbb{E}[\mathcal{C}({W}(\bm{\theta}), \mathbf{W}^c)]
\end{array}
\end{equation}

We now show that condition \eqref{eq:eq10} holds if and only if the DAG of \({W}(\hat{\bm{\theta}})\) corresponds to the true DAG \(\mathbf{B}^*\).

Firstly, consider the necessity. Let \(\hat{\bm{\theta}}\) be a parameterization such that the corresponding graph matches the ground truth, i.e., \(\mathbb{I}({W}_{ij}(\hat{\bm{\theta}}) > s) = \mathbf{B}^*_{ij}\) for all \(i, j\). According to the analysis in NOTEARS \cite{zheng2018dags,zheng2020learning}, \( \mathbf{0} \in \nabla_{{\bm{\theta}}} \mathbb{E}[\mathcal{F}_\mathbf{X}(\hat{\bm{\theta}})]\), and \(W(\hat{\bm{\theta}})\) is naturally acyclic, which implies \( \mathbf{0} \in \nabla_{{{\bm{\theta}}}} \mathcal{H}(W(\hat{\bm{\theta}}))\). Finally, according to Theorem 1, as \(\text{plim}_{n_d \to \infty }\mathbb{I}(\mathbf{W}^c_{ij}> \tau) = \mathbf{B}^*_{ij}\), thus, \(\mathbf{0} \in \partial_{\bm{\theta}} \mathbb{E} (\mathcal{C}(W(\hat{\bm{\theta}}),\mathbf{W}^c)) \) holds. Overall, condition \eqref{eq:eq10} holds.

Next, consider the sufficiency, which proceeds with a proof by contradiction. i.e. \( \mathbf{0} \in \partial_{\bm{\theta}} \mathbb{E}[\mathcal{L}(\hat{\bm{\theta}})]\) while the DAG of \(W(\hat{\bm{\theta}})\) (here we denote as \(\mathcal{G}(\hat{\bm \theta} )\)) contains a false edge or misses a true edge compare with \(\mathbf{B}^*\).

If \(\mathcal{G}(\hat{\bm{\theta}})\) contains a false edge \(X_i\to X_j\) (i.e., \({W}_{ij}(\hat{\bm{\theta}})\) is large but \(\mathbf{B}^*_{ij}=0\)). As this edge violate the data generation process of underlying SEM, thus, \(\nabla_{\bm{\theta}} \mathbb{E}[\mathcal{F}_\mathbf{X}(\hat{\bm{\theta}})] \neq \mathbf{0}\). However, if \(X_j\to X_i\) do not introduce the cycle into \(\mathbf{B}^*\) and is not constrained, both the acyclicity term \(\nabla_{{\bm{\theta}}} \mathcal{H}(W(\hat{\bm{\theta}}))\) and the knowledge term \(\partial_{{\bm{\theta}}} \mathcal{C}(W(\hat{\bm{\theta}}),\mathbf{W}^c)\) is \(\mathbf{0}\). The total sub-gradient \(\partial_{\bm{\theta}} \mathbb{E}[\mathcal{L}(\hat{\bm{\theta}})]\) cannot be zero, which break the assumption.

Similarly, if \(\mathcal{G}(\hat{\bm{\theta}})\) misses a true edge \(X_i\to X_j\) (i.e., \({W}_{ij}(\hat{\bm{\theta}})\) is small but \(\mathbf{B}^*_{ij}=1\)). \(\nabla_{\bm{\theta}} \mathbb{E}[\mathcal{F}_\mathbf{X}(\hat{\bm{\theta}})] \neq \mathbf{0}\) and while \((\nabla_{{\bm{\theta}}} \mathcal{H}(W(\hat{\bm{\theta}})) = 0\) still holds, and if \(\mathbf{B}^c_{ij}=0\), then  \(\partial_{{\bm{\theta}}} \mathcal{C}(W(\hat{\bm{\theta}}),\mathbf{W}^c)=0\) also holds, which implies that \(\partial_{\bm{\theta}} \mathbb{E}[\mathcal{L}(\hat{\bm{\theta}})]\) is not \(\mathbf{0}\).

Overall, the first-order optimality condition for the population objective, \(\mathbf{0} \in \partial_{\bm{\theta}} \mathbb{E}[\mathcal{L}(\bm{\theta})]\), is satisfied exclusively at a parameterization \(\hat{\bm{\theta}}\) where the corresponding weight matrix \({W}(\hat{\bm{\theta}})\) represents the ground-truth DAG \(\mathbf{B}^*\). Since the minimizer of the empirical objective converges to the minimizer of the population objective, the learned graph structure \(\mathbb{I}({W}(\hat{\bm{\theta}}) > s)\) converges in probability to \(\mathbf{B}^*\).
\end{proof}

\subsection{C. A Special Case.}

% appendix
When all \( f_j \) are linear, the model becomes a linear SEM, \( \mathbf{X} = \mathbf{XW} + \bm{\epsilon} \), where \( \mathbf{W} \in \mathbb{R}^{n_v \times n_v} \) is the weighted adjacency matrix. If the noise \(\bm{\epsilon}\) is non-Gaussian, the causal structure \(\mathbf{W}\) is identifiable \cite{kalisch2007estimating, shimizu2011directlingam}. The original causal discovery task can be simplified as  \cite{zheng2018dags,ng2020role}:
\begin{equation}\label{eq:eq14}
\begin{array}{l}
\mathop {\min }\limits_\mathbf{W} \: \frac{{{1}}}{n_d} \left\| {\mathbf{X} - \mathbf{XW}} \right\|_F^2  + \lambda_1 \|\mathbf{W}\|_1 \\
s.t. \: h(\mathbf{W}) = \mathrm{tr}(e^{\mathbf{W} \circ \mathbf{W}}) - n_v = 0,
\end{array}
\end{equation}
where \(h(\mathbf{W})\) is a differentiable acyclic constraint. 

In this specific context, the surrogate model can be achieved by the consistent parametric regressor (e.g. linear regression, lasso regression), where the regression coefficient of \(\mathcal{M}^{(j)}_\mathbf{X}(\mathbf{B}_{ij})\) can be directly used to approximate the truly weights in \(\mathbf{W}\). Consequently, the non-linear sigmoid mapping \(\sigma(\cdot)\) is no longer required, and \(s=\tau\) holds. Thus, knowledge-driven optimization objective admits a significant simplification
\begin{equation}\label{eq:eq15}
     \min_{\mathbf{W}} \left\| ( \mathbf{W} - \mathbf{W}^c  ) \circ \mathbf{B}^c \right\|_1,
\end{equation}
where \(\mathbf{W}^c\) is the OLS estimate:
\begin{equation}\label{eq:eq8}
    (\mathbf{W}^c)_{ij} = \left\{ {\begin{array}{ll}
        {{(\mathbf{x}_i^{\top}{\mathbf{x}_i})}^{ - 1}}\mathbf{x}_i^{\top}{\mathbf{x}_j} & {\text{if } (\mathbf{B}^c)_{ij} = 1}\\
        0 & {\text{if } (\mathbf{B}^c)_{ij} = 0}
    \end{array}} \right.
\end{equation}
As a hard incorporation, we posit that formulation \eqref{eq:eq15} is more robust than other hard but fixed-thresholding methods in \cite{wang2024incorporating}, especially when the prior knowledge \(\mathbf{B}^c\) is imperfect.
% It formulates the knowledge-based optimization objective as \({\min_{\mathbf{W}}}{\left\| {\rm{relu} (\mathbf{W} - {\mathbf{W}^p}) \circ {\mathbf{B}^c}} \right\|_1} \), where \(\mathbf{W}^p\) fixes the weights of present edges to \(s\) and absent edges to \(0\). 
The following theorem formalizes this claim by comparing the final estimators produced by each approach.

\begin{Theorem}
    Let \(\mathbf{W}^*\) be the ground-truth weight matrix and \(\mathbf{B}^c\) be an imperfect constraint matrix. For a linear non-Gaussian SEM, consider two estimators for \(\mathbf{W}\):
    \begin{equation}\label{eq:eq17}
    \begin{array}{ll}
         & \hat{\mathbf{W}}^c = \mathop{\arg \min}_{\mathbf{W}} \mathcal{F}'_{\mathbf{X}}(\mathbf{W}) + \lambda_3 \left\| (\mathbf{W} - \mathbf{W}^c) \circ \mathbf{B}^c \right\|_1 \\
         & \hat{\mathbf{W}}^p = \mathop{\arg \min}_{\mathbf{W}} \mathcal{F}'_{\mathbf{X}}(\mathbf{W}) + \lambda_3 \left\| \rm{relu}(\mathbf{W}^p - |\mathbf{W}_{ij}| )  \circ \mathbf{B}^c \right\|_1,
    \end{array}
    \end{equation}
    where \(\mathcal{F}'_{\mathbf{X}}(\mathbf{W}) = \mathcal{F}_\mathbf{X}(\mathbf{W}) + \lambda_2\mathcal{H}(\mathbf{W})\). If \(\mathbf{B}^c\) is consistent, then the estimation error of \(\hat{\mathbf{W}}^c\) is less than or equal to that of \(\hat{\mathbf{W}}^p\) when \(n_d \to \infty\):
    \begin{equation}\label{eq:eq18}
    \forall X_i,X_j \in \bm{V},
        |\hat{\mathbf{W}}^c_{ij} - \mathbf{W}^*_{ij}| \le |\hat{\mathbf{W}}^p_{ij}  - \mathbf{W}^*_{ij}|.
    \end{equation}
\end{Theorem}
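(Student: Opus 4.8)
The plan is to show that, under consistency, the aligned estimator $\hat{\mathbf{W}}^c$ attains zero asymptotic error on every entry, while the fixed-threshold estimator $\hat{\mathbf{W}}^p$ cannot avoid positive error on the misspecified entries; the pointwise inequality then follows because estimation error is nonnegative. First I would pin down the probability limit of the alignment target $\mathbf{W}^c$ on the constrained support. Theorem~1 already guarantees that the two constrained groups separate about $\tau$, but the present claim needs the sharper statement $\text{plim}_{n_d\to\infty}(\mathbf{W}^c\circ\mathbf{B}^c) = \mathbf{W}^*\circ\mathbf{B}^c$. For a false-positive entry $X_i\in\bm{V}^{1,0}_j$, the first consistency condition $X_i\Perp X_j\mid\bm{V}^{1,1}_j$ forces the regression coefficient to vanish, matching $\mathbf{W}^*_{ij}=0$ (this is Eq.~(3) of the proof of Theorem~1). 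For a true-positive entry $X_i\in\bm{V}^{1,1}_j$ I would invoke the second consistency condition $\bm{V}^{1,1}_j\Perp\bm{V}^{0,1}_j$: the marginalization over the omitted true parents $\bm{V}^{0,1}_j$ leaves no omitted-variable bias on the retained parents, so the consistent regressor recovers the structural coefficient, i.e. the probability limit equals $\mathbf{W}^*_{ij}$.

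Next I would characterize $\hat{\mathbf{W}}^c$. Asymptotically $\mathcal{F}'_{\mathbf{X}}(\mathbf{W}) = \mathcal{F}_\mathbf{X}(\mathbf{W}) + \lambda_2 \mathcal{H}(\mathbf{W})$ is minimized at $\mathbf{W}^*$ by identifiability of the linear non-Gaussian SEM (the same population argument invoked in Theorem~2). By the previous step, the knowledge term $\|(\mathbf{W}-\mathbf{W}^c)\circ\mathbf{B}^c\|_1$ converges to $\|(\mathbf{W}-\mathbf{W}^*)\circ\mathbf{B}^c\|_1$, whose minimum value zero is also attained at $\mathbf{W}=\mathbf{W}^*$. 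Since both terms are simultaneously minimized at $\mathbf{W}^*$, there is no trade-off and their sum is minimized there as well, so $\text{plim}_{n_d\to\infty}\hat{\mathbf{W}}^c=\mathbf{W}^*$ and $|\hat{\mathbf{W}}^c_{ij}-\mathbf{W}^*_{ij}|\to 0$ for every $(i,j)$. Because this is the smallest possible value of a nonnegative quantity, the desired inequality $|\hat{\mathbf{W}}^c_{ij}-\mathbf{W}^*_{ij}|\le|\hat{\mathbf{W}}^p_{ij}-\mathbf{W}^*_{ij}|$ holds pointwise, and any error that the coupled objective of $\hat{\mathbf{W}}^p$ propagates to neighboring entries can only reinforce it.

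To make the comparison informative rather than vacuous, I would finally verify that $\hat{\mathbf{W}}^p$ incurs strictly positive error on the misspecified entries. The penalty $\|\mathrm{relu}(\mathbf{W}^p-|\mathbf{W}_{ij}|)\circ\mathbf{B}^c\|_1$ forces $|\mathbf{W}_{ij}|\ge\mathbf{W}^p$ on every constrained entry irrespective of its truth value. On a false-positive entry ($\mathbf{W}^*_{ij}=0$) this conflicts with the data term, whose population minimizer is $0$, so the combined minimizer is bounded away from $0$ and the error is of order $\mathbf{W}^p$; the same occurs for any true edge with $|\mathbf{W}^*_{ij}|<\mathbf{W}^p$.

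The step I expect to be the main obstacle is the first one. Theorem~1 only delivers threshold separation, whereas the claim needs exact recovery $\mathbf{W}^c\to\mathbf{W}^*$ (not merely support/sign recovery); establishing this requires the careful omitted-variable-bias argument built on $\bm{V}^{1,1}_j\Perp\bm{V}^{0,1}_j$, together with some care in reconciling the univariate OLS form of Eq.~(16) with the structural coefficient when retained parents are mutually correlated. Should only approximate recovery hold, I would fall back to the weaker but sufficient claim that the aligned penalty is centered at a consistent target and therefore biases the estimate strictly less than the fixed-threshold penalty on the conflicting entries while agreeing with it elsewhere, which still yields the entrywise inequality~\eqref{eq:eq18}.
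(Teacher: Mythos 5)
Your proposal is correct in substance but organizes the argument differently from the paper. The paper argues \emph{locally}: it sets \(\mathbf{W}^\#\) to be the minimizer of the data term \(\mathcal{F}'_{\mathbf{X}}\) alone, splits into the cases \(\mathbf{B}^c_{ij}=0\) (both penalties are nullified by the Hadamard mask, so the two estimators coincide entrywise) and \(\mathbf{B}^c_{ij}=1\), and then compares the sub-gradients the two penalties contribute at \(\mathbf{W}^\#\): the aligned \(\ell_1\) term contributes \(\lambda_3\,\mathrm{sgn}(\mathbf{W}^\#_{ij}-\mathbf{W}^c_{ij})\) and pulls the solution toward \(\mathbf{W}^c_{ij}\), asymptotically the truth by Theorem~1, whereas the relu term acts as a fixed lower bound at the threshold \(s\) and, on flawed entries (\(\mathbf{B}^*_{ij}=0\) with \(\mathbf{W}^p_{ij}=s\), or \(\mathbf{B}^*_{ij}=1\) with \(\mathbf{W}^p_{ij}=0\)), forces the solution at least \(s\) away from \(\mathbf{W}^*_{ij}\). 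You instead argue \emph{globally}: first upgrade Theorem~1 to exact recovery \(\mathbf{W}^c\circ\mathbf{B}^c\to\mathbf{W}^*\circ\mathbf{B}^c\), then note that the data term and the aligned penalty are simultaneously minimized at \(\mathbf{W}^*\) in the population, conclude \(\hat{\mathbf{W}}^c\to\mathbf{W}^*\) entrywise, and obtain the inequality for free from nonnegativity of \(|\hat{\mathbf{W}}^p_{ij}-\mathbf{W}^*_{ij}|\); your analysis of \(\hat{\mathbf{W}}^p\) then serves only to show the inequality is non-vacuous (strict on misspecified entries), whereas in the paper it is load-bearing. Your route is cleaner once the recovery lemma is in hand---no case analysis over where \(\hat{\mathbf{W}}^p\) errs is needed---while the paper's route avoids claiming a global population minimizer but must then assert, essentially without proof, that the pull toward \(\mathbf{W}^c\) drives the error to zero, which presupposes exactly the lemma you isolate.

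You have also put your finger on the genuine weak point that both proofs share, and you are right that it is the main obstacle. Theorem~1 delivers only threshold separation (and an exact zero limit on false positives in \(\bm{V}^{1,0}_j\)), not exact weight recovery on true positives; and the per-entry estimator of Appendix~C, \(\mathbf{W}^c_{ij}=(\mathbf{x}_i^{\top}\mathbf{x}_i)^{-1}\mathbf{x}_i^{\top}\mathbf{x}_j\), has probability limit \(\mathrm{Cov}(X_i,X_j)/\mathrm{Var}(X_i)\), which differs from the structural coefficient \(\mathbf{W}^*_{ij}\) whenever \(X_i\) is dependent on other parents of \(X_j\)---the consistency assumption restricts dependence between retained and \emph{omitted} parents (\(\bm{V}^{1,1}_j\Perp\bm{V}^{0,1}_j\)) but says nothing about dependence among the retained parents themselves. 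The paper elides this by citing Theorem~1 as making \(\mathbf{W}^c\) ``asymptotically consistent with the ground-truth DAG,'' a support-level statement that does not by itself yield \(|\hat{\mathbf{W}}^c_{ij}-\mathbf{W}^*_{ij}|\to 0\); your omitted-variable-bias argument via zero-mean noise closes this only when retained parents are mutually uncorrelated or a multivariate regression is used. Your proposed fallback---that the aligned penalty is centered at a consistent (or at least correctly signed and separated) target while the fixed-threshold penalty is centered at a constant independent of the truth---is in fact the honest reading of the paper's own local argument, so on this point your blind reconstruction is, if anything, more careful than the original.
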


\begin{proof}
    Let \(\mathbf{W}^\#\) denote the optimal solution for \(\mathcal{F}'_\mathbf{X}(\mathbf{W})\). From the first-order optimality conditions, the sub-gradient of \(\mathcal{F}'_{\mathbf{X}}\) at \(\mathbf{W}^\#\) must contain the zero vector
    \begin{equation}\label{eq:eq19}
        \mathbf{0} \in \partial \mathcal{F}'_{\mathbf{X}}(\mathbf{W}) |_{\mathbf{W}=\mathbf{W}^\#} = \nabla \mathcal{F}_\mathbf{X}(\mathbf{W}^\#) + \lambda_2 \partial \mathcal{H}(\mathbf{W}^\#).
    \end{equation}
    Thus, we focus on how the third term perturbs the solution from \(\mathbf{W}^\#\). The analysis proceeds with an element-wise consideration of an arbitrary weight \(\mathbf{W}_{ij}\).
    
    When prior constraint is inactive (\(\mathbf{B}^c_{ij} = 0\)), the third regularization term in both objective functions is nullified by the Hadamard product with zero, and 
    \begin{equation}\label{eq:eq20}
        \hat{\mathbf{W}}^c_{ij} = \hat{\mathbf{W}}^p_{ij} = \mathbf{W}^\#_{ij}.
    \end{equation}
    Consequently, their estimation errors are equal, and the inequality in the theorem holds as an equality.
    
    When prior constraint is active (\(\mathbf{B}^c_{ij} = 1\)), the sub-gradient of objective \eqref{eq:eq17}-1 can be written as: 
    \begin{equation}\label{eq:eq21}
        \partial \mathcal{F}'_\mathbf{X}(\mathbf{W}_{ij}) |_{\mathbf{W}=\mathbf{W}^\#} + \lambda_3 \cdot \partial |\mathbf{W}_{ij} - \mathbf{W}^c_{ij}| |_{\mathbf{W}=\mathbf{W}^\#}
    \end{equation}
    From Eq. \eqref{eq:eq19}, the sub-gradient of the first term contains 0. When the dataset is not infinite, according to Theorem 2, \(\mathbf{W}^\#_{ij} \neq \mathbf{W}^c_{ij}\) holds. The sub-gradient of the second term is \( \lambda_3 \cdot \text{sgn}(\mathbf{W}^\#_{ij} - \mathbf{W}^c_{ij})\). Thus, the solution must move from \(\mathbf{W}^\#_{ij}\) in the direction opposite to the gradient, i.e., towards the \(\mathbf{W}^c_{ij}\).
    
    However, the sub-gradient of \eqref{eq:eq17}-2 can be formulated as
    \begin{equation}\label{eq:eq22}
        \partial \mathcal{F}'_\mathbf{X}(\mathbf{W}_{ij}) |_{\mathbf{W}=\mathbf{W}^\#} + \lambda_3 \cdot \partial \rm{max}(0, \mathbf{W}^p_{ij}- |\mathbf{W}_{ij}| ) |_{\mathbf{W}=\mathbf{W}^\#}.
    \end{equation}
    This penalty acts as a lower bound, penalizing any value of \(\mathbf{W}_{ij}\) that is not achieve the fixed prior \(\mathbf{W}^p_{ij} = s\), where \(s\) is the threshold for edge presence.
    
    However, when the \(\mathbf{B}^c\) is imperfect, Eq. \eqref{eq:eq21} pushes the \(\mathbf{W}^\#\) towards the \(\mathbf{W}^c\), which is asymptotically consistent with the ground-truth DAG \(\mathbf{W}^*\) according to Theorem 1. Thus,  \( |\hat{\mathbf{W}}^p_{ij}  - \mathbf{W}^*_{ij}| \to 0 \) holds when \(n_d \to \infty\).
    In contrast, for Eq. \eqref{eq:eq22}, there \(\exists X_i,X_j, \: s.t. \: \mathbf{B}^*_{ij} = 1,\mathbf{W}^p_{ij} = 0\) or \(\exists X_i,X_j, \: s.t. \: \mathbf{B}^*_{ij} = 0,\mathbf{W}^p_{ij} = s\), which forced \(\mathbf{W}^\#\) far away from \(\mathbf{W}^*\), results in \(|\hat{\mathbf{W}}^p_{ij}  - \mathbf{W}^*_{ij}| > s\) when \(n_d \to \infty\).
    
    By synthesizing the analyses for the case where the constraint is active \(\mathbf{B}^c = 1\) and inactive \(\mathbf{B}^c = 0\), the following inequality is shown to hold universally
    \begin{equation}
    \forall X_i,X_j \in \bm{V},
        |\hat{\mathbf{W}}^c_{ij} - \mathbf{W}^*_{ij}| \le |\hat{\mathbf{W}}^p_{ij}  - \mathbf{W}^*_{ij}|.
    \end{equation}
\end{proof}

\subsection{D. Normalization methods.}
In this paper, we consider the following four types of normalization methods for the gradients of different tasks.
\begin{enumerate}
    \item  \textit{L2}: Normalizes each gradient to a unit vector, retaining only its direction.
    \begin{equation}\label{eq:eq24}
        \Phi_{\alpha}= \frac{\Phi_{\alpha}}{\|\Phi_{\alpha}  \|_2  }, \Phi_{\beta}= \frac{\Phi_{\beta} }{ \|\Phi_{\beta}  \|_2  }.
    \end{equation}
    \item \textit{Loss}: Scales each gradient by the loss value of the objective.
    \begin{equation}\label{eq:eq25}
        \Phi_{\alpha}= \frac{\Phi_{\alpha} }{ \mathcal{F}_{\mathbf{X}}(\bm{\theta}_t) + \lambda_2 \mathcal{H}(W(\bm{\theta}_t))  }, \Phi_{\beta}= \frac{\Phi_{\beta} }{ \mathcal{C}(W(\bm{\theta}_t),\mathbf{W}^c)  }.
    \end{equation}
    \item \textit{Loss+}: Combines the above methods, normalizing by both the gradient's L2-norm and the objective's loss value.
    \begin{equation}\label{eq:e26}
        \begin{array}{ll}
             & \Phi_{\alpha}= \Phi_{\alpha} \cdot  [( \mathcal{F}_{\mathbf{X}}(\bm{\theta}_t) + \lambda_2 \mathcal{H}(W(\bm{\theta}_t)))\cdot\|\Phi_{\alpha}  \|_2]^{-1}   \\
             & \Phi_{\beta}= \Phi_{\beta} \cdot  [\mathcal{C}(W(\bm{\theta}_t),\mathbf{W}^c) \cdot \|\Phi_{\beta}  \|_2 ]^{-1} .
        \end{array}
    \end{equation}
    \item \textit{None}: Uses the primarily value of \(\Phi_\alpha,\Phi_\beta\).
\end{enumerate}
It is worth noting that \textit{L2} normalization simplifies the multi-objective optimization with a equal weighted sum scalarization, as it invariably results in \(\lambda_\alpha=0.5\). In experiments, we evaluated four normalization methods and selected \textit{Loss+}, which demonstrated the best performance.

\subsection{E. Complexity Analysis.}
The time complexity of RoaDs is analyzed in two distinct stages: prior alignment and MTL optimization. The first stage involves a one-time pre-computation to fit the surrogate regressor. The complexity of this step is dependent on the chosen model, for instance, \(O(kn_v^2)\) for linear regression. The second stage's computational bottleneck remains the gradient calculation for the acyclicity constraint \(h(W(\bm{\theta}))\), which has a complexity of \(O(n_v^3)\) \cite{zheng2018dags,ng2020role,zheng2020learning}. Other operations, such as computing the value of \(\mathcal{C}(W(\bm{\theta}), \mathbf{W}^c)\) and executing the MGDA solver, have a lower complexity of \(O(n_v^2)\).

Thus, the complexity of RoaDs is still dominated by the acyclicity constraint. Assuming the one-time cost of the selected surrogate model does not exceed this bound, the overall iterative complexity is \(O(n_v^3)\).

\subsection{F. Experimental Settings.}

\paragraph{Graphs, Datasets and Constraints.} Synthetic datasets were generated using the \texttt{gcastle} library\footnote{https://github.com/huawei-noah/trustworthyAI/tree/master/gcastle}, based on ER and SF graph structures. For the linear SEM, the scale of equal variance (EV) noise (Gauss, Exp, Gumbel, Normal) was set to 1, and non-equal variance (NV) noise was obtained by standardizing the data under EV noise \cite{reisach2021beware, ng2024structure}. For non-linear SEMs, the functional relationships (MLP and GP) were modeled using the default parameters within \texttt{gcastle}. The real-world Sachs dataset was obtained from the \texttt{bnlearn repository}\footnote{https://www.bnlearn.com/research/sachs05/} \cite{sachs2005causal}. The values of these parameters are detailed in Table \ref{tab:tab1}.

\begin{table}[ht]
\centering
\small % 缩小字体（可选）
\setlength{\tabcolsep}{4pt} % 减小列间距
\begin{tabularx}{\columnwidth}{l>{\raggedright\arraybackslash}Xl} % X列自动换行
\toprule
Notation & Meanings & Value \\
\midrule
\(n_v\) & Number of nodes & \(\{20,40\}\) \\
\(k\) & Ratio of edges & \({1,2,4}\) \\
\(n_d\) & Number of dataset & \(2n_v, 4n_v\) \\
\(p_a\) & Positive constraints rate & \(\{0.1,0.3,0.5,0.7\}\) \\
\(p_c\) & Negative constraints ratio & \(0,1,2\) \\
\(p_b\) & Imperfect priors rate & \(\{0,0.1,0.3,0.5,0.7\}\) \\
\bottomrule
\end{tabularx}
\caption{Experiment settings on graphs, datasets and constraints.}
\label{tab:tab1}
\end{table}

\paragraph{Baselines.} All baseline methods were implemented using the \texttt{gcastle} library. We selected two foundational continuous optimization algorithms: GOLEM for linear SEMs and NOTEARS-MLP for non-linear SEMs. GOLEM optimizes the following objective:
\begin{equation}
    \begin{array}{l}
    \mathop {\min }\limits_\mathbf{W} {\cal L}(\mathbf{W};\mathbf{X}) - \log \left| {\det (\mathbf{I} - \mathbf{W})} \right| + {\lambda _1}{\left\| \mathbf{W} \right\|_1} + {\lambda _2}h(\mathbf{W})\\
    {{\cal L}_{EV}}(\mathbf{W};\mathbf{X}) = \frac{{{n_v}}}{2}\log \left\| {\mathbf{X} - \mathbf{X}\mathbf{W}} \right\|_F^2\\
    {{\cal L}_{NV}}(\mathbf{W};\mathbf{X}) = \frac{1}{2}\sum\limits_{i = 1}^{{n_v}} {\log \left\| {{\mathbf{X}_{:,i}} - \mathbf{X}{\mathbf{W}_{:,i}}} \right\|_2^2}.
    \end{array}
\end{equation}
In accordance with \cite{ng2024structure}, we set the sparsity penalty \(\lambda_1\)
to 0.2 for the EV noise and 0.1 for the NV noise. A larger sparsity penalty is crucial for small sample sizes to mitigate overfitting to least square loss, thereby avoiding the inclusion of superfluous edges. The models were trained for a maximum of 10,000 iterations using the Adam optimizer. Other parameters were set to default values in \texttt{gcastle}. NOTEARS-MLP aims to optimize Eq. (2) in main paper, and we still set the sparsity penalty as \(\lambda_1 = 0.1\), while the remaining parameters were left at default settings.

We employed stable version of PC \cite{colombo2014order}. Fisher's Z-test was utilized for the conditional independence tests, with \(\alpha=0.05\). For the LiNGAM family, we used DirectLiNGAM, a method based on iterative regression and residual comparisons \cite{shimizu2011directlingam}.

The NTWS-B minimizes the following objective function\cite{wang2024incorporating}:\
\begin{equation}
     \mathop {\min }\limits_{\mathbf{W}} \lambda_3(\left\| \rm{relu}(s\cdot\mathbf{B}^c_{=1} - |\mathbf{W}|_{ij} )  \circ \mathbf{B}^c_{=1} \right\|_1 + \left\| |\mathbf{W}_{ij}|\circ \mathbf{B}^c_{=0} \right\|_1).
\end{equation}
We set \(s=0.3\), \(\lambda_3=\lambda_1\) to align with the sparsity penalization. These settings were retained for the non-linear case.

The CEA  minimizes the following objective function \cite{chen2025continuous}:
\begin{equation}
\begin{array}{ll}
     & \mathop {\min }\limits_\mathbf{W} -\xi^2 \left\| {{\mathbf{B}^c} \circ \log ({\mathbf{W}'} \circ {\mathbf{W}^p} + (1 - {\mathbf{W'}}) \circ (1 - {\mathbf{W}^p}))} \right\|_\Sigma   \\
     & \mathbf{W}' = |2\sigma(\mathbf{W}) - 1|
\end{array}
\end{equation}

Following the recommendations from the source paper, we set \(\xi=1\). The prior knowledge matrix \(\mathbf{W}^p \in \{e_p,e_a \}^{n_v \times n_v},\) was configured with \(e_p=0.9\) to represent positive edge constraints and \(e_a=0.1\) for negative edge constraints. For nonlinear case, an additional sparsity term governed by \(\lambda_3\) was incorporated into the objective function.

Finally, thresholding was set as 0.3 and applied to the absolute edge weights to convert them into a binary graph.

\paragraph{Metrics.}

The metrics for evaluating the graphical accuracy involve F1 and SHD \cite{kitson2023survey}. F1 is defined as 
\[Prec\text{=}\frac{\mathrm{TP}}{\mathrm{TP+FP}},Rec=\frac{\mathrm{TP}}{\mathrm{TP+FN}},F1=2\frac{Prec\centerdot Rec}{Prec+Rec}\]
where TP denote the number of directed edges correctly identified in the learned DAG that also exist in the ground-truth benchmark DAG. FP correspond to spurious edges present in the learned DAG but absent in the benchmark DAG, while FN represent edges in the benchmark DAG that are missing in the learned DAG. SHD is the sum of the number of superfluous edges, missing edges and reversed edges.

\paragraph{Implementation details.} For the linear case, we employed linear regression and lasso regression. For the non-linear case, we utilized polynomial regression and random forest regression. For former, terms were considered up to the third degree to maintain the time complexity of RoaDs below \(O(n_v^3)\). The latter was configured with 100 trees, and its feature importance was evaluated using permutation importance, with 10 repetitions for each permutation. Across all scenarios, the number of warm-up iterations was set to 10. Furthermore, to mitigate overfitting in the non-linear setting, we did not apply the constrained weight matrix mask.

\subsection{G. Main Results.}

Tables \ref{tab:tab2} and \ref{tab:tab3} extend the Tables 1 and 2 in main paper, summarize the performance of RoaDs under other two noise types (Gumbel and Uniform, both EV and NV), respectively.
Under the EV condition, RoaDs consistently achieves the highest F1 score, outperforming NTS-B and ECA by an average of 4.1\% and 16.2\%, respectively. Conversely, in the NV setting, PC-stable delivers the best performance among the evaluated algorithms, with our proposed RoaDs demonstrating comparable results.
These findings are largely consistent with the conclusions drawn from Tables 1 and 2 in the main paper.

\paragraph{Effect of num of edges.}
Tables \ref{tab:tab4} through \ref{tab:tab9} detail the comparative performance of the algorithms on graphs with varying densities (ER-1, ER-4, SF-1, and SF-4) under diverse SEM and noise configurations.

In linear setting, while NTS-B consistently achieves a lower SHD than RoaDs in denser graphs (ER-4, SF-4), but RoaDs maintains a superior F1-score compared to both NTS-B and ECA. Conversely, in sparser graphs (ER-1, SF-1), RoaDs demonstrates exceptional robustness, securing the highest F1 score and the lowest SHD in 7 out of 8 conditions. We hypothesize that in highly dense graphs, the accuracy of the prior alignment process may decrease, leading to erroneous prior estimates and less precise final DAGs.

In the non-linear setting, RoaDs continues to deliver compelling performance across both F1 and SHD criteria, regardless of graph density. As other continuous optimization methods tend to overfit the least-squares objective, often producing overly dense graphs with SHD scores exceeding 100. Therefore, RoaDs establishes itself as a robust and reliable choice for non-linear causal discovery with priors.

\paragraph{Effect of size of dataset.}
Tables \ref{tab:tab10}, \ref{tab:tab11}, and \ref{tab:tab12} detail the algorithmic performance with an increased sample size of \(n_d=4n_v\). Nevertheless, RoaDs consistently maintains its superior performance across both linear and non-linear settings. The provision of more data further enhances its accuracy, with the F1 score improving by over 1.6\% in the linear case and 4.5\% in the non-linear case compared to the results from the smaller dataset \(n_d=2n_v\).

\paragraph{Effect of num of nodes.}
Tables \ref{tab:tab13} and \ref{tab:tab14} present a comparative analysis of the algorithms' performance under \(n_v=40\). LiNGAM fails to produce an acyclic structure, likely due to disturbances from the imperfect constraints. In this scenario, the advantages of RoaDs become more pronounced. It outperforms NTS-B with an average F1 score improvement of 3.6\% and surpasses ECA by 14.4\%. Furthermore, RoaDs achieves a significant reduction in SHD, averaging 8.6\% lower than that of GOLEM-EV.

These findings indicate that RoaDs remains a robust and superior choice for causal discovery, even when applied to larger-scale problems.

\subsection{H. Influence of Priors.}
This subsection evaluates the performance of continuous-based methods under different rates of positive edge constraints \(p_a\), negative edge constraints \(p_c\) and flawed constraints \(p_b\). All experiments were conducted with a fixed setup of \(n_v=20,n_d=2n_v,k=2\). To analyze each parameter's effect, one rate was varied while the others were held at baseline values (specifically, \(p_a=0.3\), \(p_b=0.3\) and \(p_c = 1\)). Note that in the non-linear setting, both ECA and NTS-B fail to estimate the true DAG accurately. Consequently, our comparative analysis is focused on PC-stable and NOTEARS-MLP.

\paragraph{Positive edge constraints rate. } Figures \ref{fig:fig1} through \ref{fig:fig4} illustrate how the SHD of the evaluated methods changes as \(p_a\) increase. In both linear SEM settings (EV and NV), the performance of RoaDs improves progressively with the quantity of available prior knowledge. The method's prior alignment mechanism effectively identifies and utilizes these constraints, leading to a progressive decrease in the SHD of the learned DAG. The results on nonlinear conditon demonstrate that RoaDs consistently outperforms PC-stable. 

\paragraph{Imperfect constraints rate. } Figures \ref{fig:fig5} through \ref{fig:fig8} demonstrate the algorithmic performance under an increasing of \(p_b\). ECA proves to be highly sensitive to incorrect priors and its SHD sharp increase when more flawed constraints are introduced. In contrast, the other three algorithms exhibit greater stability and RoaDs consistently maintains the lowest SHD. This robustness is particularly evident in the non-linear condition. While PC-stable also shows sensitivity to flawed priors, the proposed RoaDs method sustains a remarkably stable performance.

\paragraph{Negative constraints ratio. } Figures \ref{fig:fig9} through \ref{fig:fig13} show the SHD distribution under varying ratios of negative constraints \(p_c=0,1,2\). In the linear setting, RoaDs is robust to the composition of prior knowledge. Regardless of whether negative constraints are absent or abundant, RoaDs consistently outperforms ECA and NTS-B, while also maintaining a slight advantage over GOLEM. However, in the non-linear condition, the performance of RoaDs becomes less stable. Too many negative constraints disturb the prior alignment, which leads to an incorrect estimation of the weight matrix \(\mathbf{W}^c\), resulting in a notable performance decrease.

\subsection{I. Other Parameter Experiments.}

\paragraph{Running Time.} Figure \ref{fig:fig14} presents the convergence times for each method in the nonlinear setting. PC-Stable is the fastest and consistently terminates in 5s. Its efficiency is derived from its non-iterative nature. The runtime of NOTEARS-MLP is affected by the noise type: it converges rapidly under GP noise, but it produces a trivial, near-empty graph (see Table 3 in main paper). Where it successfully learns a DAG(e.g., MLP noise), its runtime is notably longer than that of our proposed method. NTS-B and ECA both employ a equal weighted-sum scalarization to combine the objectives and are the most computationally intensive. RoaDs is significantly more efficient than them, and its advantage stems from the use of the MGDA. Instead of relying on scalarization, MGDA computes a common descent direction that guarantees simultaneous improvement for both objectives. This leads to a more direct and faster convergence path, consistently reducing the overall computational time.

\paragraph{Normalization methods.} Tables \ref{tab:tab15} and \ref{tab:tab16} compare the influence of different normalization methods on the RoaDs algorithm in the linear case. The results demonstrate that normalizing the terms \(\Phi_\alpha, \Phi_\beta\) using both the loss function value and the L2-norm of its gradient achieves the best performance.

\paragraph{Surrogate models.} Tables \ref{tab:tab17} and \ref{tab:tab18} compare the performance of different surrogate models used in the prior alignment process. In the linear case, standard linear regression outperforms Lasso regression, which is because the priors already implicitly contain sparsity information. In the non-linear setting, random forest regression proves superior to polynomial regression, as the non-parametric nature of random forests allows for greater flexibility and accuracy.

\subsection{J. Case Study.}

The true causal graph of Saches is sourced from \texttt{BN Repository} \footnote{https://www.bnlearn.com/bnrepository/} \cite{sachs2005causal}, depicted in the top-left panel of Figure \ref{fig:fig15}. From left to right, these nodes represent \textit{Raf, Mek, Plcg, PIP2, PIP3, Erk, Akt, PKA, PKC, P38,} and \textit{Jnk}.
For this experiment, we introduced imperfect constraints with parameters \(p_a=0.3,p_b=0.3,p_c=1\), as illustrated in the second panel of Figure \ref{fig:fig15}. GOLEM consistently converged to the same DAG regardless of the threshold used (Figure \ref{fig:fig15}, top row, third panel). LiNGAM failed to learn a valid DAG, while the output of PC-stable is also shown (Figure \ref{fig:fig15}, top row, fourth panel).

The performance of continuous optimization methods was evaluated in remain rows of Figure \ref{fig:fig15} with \(s=0.05,0.1,0.2,0.3\). A quantitative comparison was summarized in Table \ref{tab:tab19}. Unsurprisingly, RoaDs achieves a stable SHD of 13, matching the performance achieved without imperfect constraints. Furthermore, it obtains the highest F1 score for \(s=0.05,0.1,0.2\), indicating its superior ability to learn reliable causal relationships from observational data even when provided with flawed prior knowledge.

\begin{figure}[htbp]
    \centering
    \includegraphics[width=0.9\linewidth]{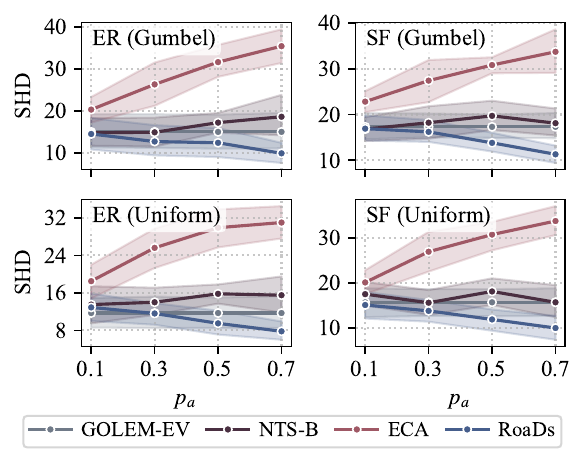}
    \caption{SHD of continuous-based methods under different \(p_a\) (linear SEM with gumbel and uniform noise (EV)).}
    \label{fig:fig1}
\end{figure}

\begin{figure}[htbp]
    \centering
    \includegraphics[width=0.9\linewidth]{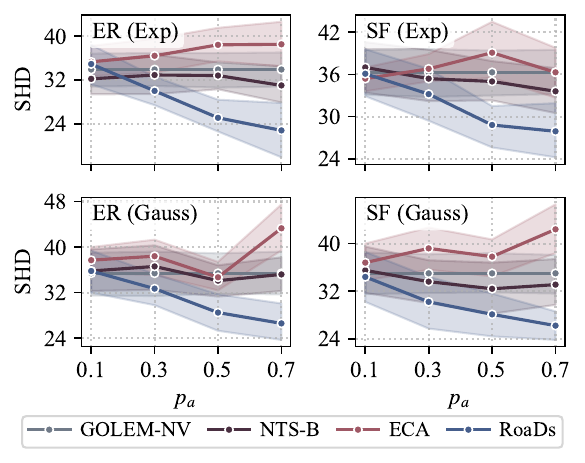}
    \caption{SHD of continuous-based methods under different \(p_a\) (linear SEM with exp and gauss noise (NV)).}
    \label{fig:fig2}
\end{figure}

\begin{figure}[htbp]
    \centering
    \includegraphics[width=0.9\linewidth]{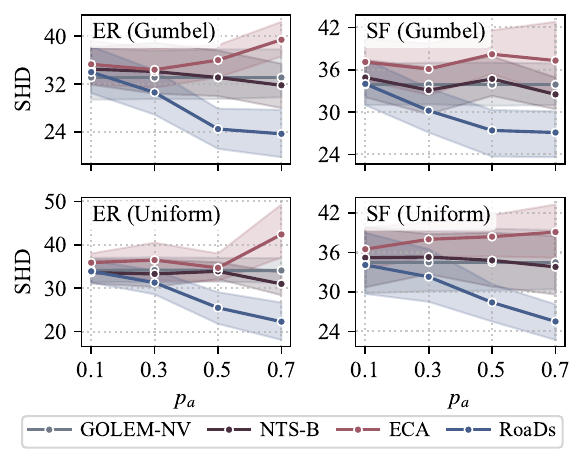}
    \caption{SHD of continuous-based methods under different \(p_a\) (linear SEM with gumbel and uniform noise (NV)).}
    \label{fig:fig3}
\end{figure}

\begin{figure}[htbp]
    \centering
    \includegraphics[width=0.9\linewidth]{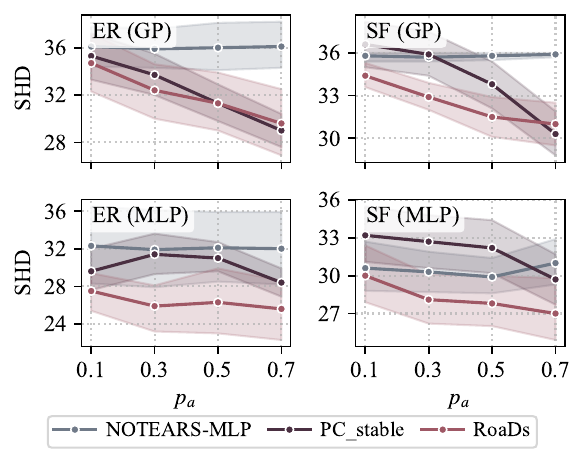}
    \caption{SHD of continuous-based methods and PC-stable under different \(p_a\) (nonlinear SEM).}
    \label{fig:fig4}
\end{figure}

\begin{table*}[htbp]
\centering
\footnotesize
\begin{tabular}{l*{8}{c}}
\toprule
\multirow{2}{*}[\multirowoffset]{Method} & \multicolumn{2}{c}{Gumbel (ER)} & \multicolumn{2}{c}{Uniform (ER)} & \multicolumn{2}{c}{Gumbel (SF)}& \multicolumn{2}{c}{Uniform (SF)} \\
\cmidrule(lr){2-3} \cmidrule(lr){4-5} \cmidrule(lr){6-7} \cmidrule(lr){8-9}
 & F1($\uparrow$) & SHD($\downarrow$) & F1($\uparrow$) & SHD($\downarrow$) & F1($\uparrow$) & SHD($\downarrow$)  & F1($\uparrow$) & SHD($\downarrow$) \\
\cmidrule(lr){1-9}
PC\_stable   & 0.416 & 28.2 & 0.389 & 28.9 & 0.383 & 30.1 & 0.398 & 30.0\\
LiNGAM       & 0.229 & 46.8 & 0.227 & 45.5 & 0.235 & 50.5 & 0.232 & 51.1 \\
NTS-B          & 0.760 & 14.9 & 0.776 & 14.0 & 0.689 & 18.2 & 0.744 & 15.6 \\
ECA          & 0.633 & 26.3 & 0.633 & 25.6 & 0.605 & 27.4 & 0.613 & 26.9 \\
RoaDs (Ours)        & \textbf{0.805} & \textbf{12.7} & \textbf{0.818} & \textbf{11.6} & \textbf{0.732} & \textbf{16.2} & \textbf{0.777} & \textbf{13.8} \\
\cmidrule(lr){1-9}
GOLEM-EV     & 0.752 & 15.0 & 0.813 & 11.7 & 0.706 & 17.3 & 0.731 & 15.7 \\
\bottomrule
\end{tabular}
\caption{Extension Table 1 in main paper (\(\uparrow\): higher is better, \textbf{bold}: best performance among algorithms that incorporates priors).}
\label{tab:tab2}
\end{table*}
% Comparison under EV noise (gumbel and uniform) for linear SEM on the ER-2 and SF-2 ($n_v = 20$, $n_d=2n_v$, $p_a,p_b,p_c = 0.3,0.3,1$) (\(\uparrow\): higher is better, \textbf{bold} indicates the best performance among algorithms that incorporates priors) .

\begin{table*}[htbp]
\centering
\small
\begin{tabular}{l*{8}{c}}
\toprule
\multirow{2}{*}[\multirowoffset]{Method} & \multicolumn{2}{c}{Gumbel (ER)} & \multicolumn{2}{c}{Uniform (ER)} & \multicolumn{2}{c}{Gumbel (SF)}& \multicolumn{2}{c}{Uniform (SF)} \\
\cmidrule(lr){2-3} \cmidrule(lr){4-5} \cmidrule(lr){6-7} \cmidrule(lr){8-9}
 & F1($\uparrow$) & SHD($\downarrow$) & F1($\uparrow$) & SHD($\downarrow$) & F1($\uparrow$) & SHD($\downarrow$)  & F1($\uparrow$) & SHD($\downarrow$) \\
\cmidrule(lr){1-9}
PC\_stable   & \textbf{0.416} & \textbf{28.2} & 0.389 & \textbf{28.9} & 0.383 & 30.1 & \textbf{0.398} & \textbf{30.0}\\
LiNGAM       & 0.153 & 50.0 & 0.167 & 47.7 & 0.151 & 54.9 & 0.122 & 52.8 \\
NTS-B          & 0.323 & 34.1 & 0.355 & 33.3 & 0.351 & 33.1 & 0.317 & 35.3 \\
ECA          & 0.382 & 34.4  & 0.360 & 36.5 & 0.358 & 36.1 & 0.338 & 38.0 \\
RoaDs (Ours)        & 0.410 & 30.6 & \textbf{0.395} & 31.3 & \textbf{0.412} & \textbf{30.0} & 0.393 & 32.3 \\
\cmidrule(lr){1-9}
GOLEM-NV     & 0.351 & 33.1  & 0.342 & 34.2 & 0.311 & 33.9 & 0.311 & 34.5 \\
\bottomrule
\end{tabular}
\caption{Extension Table 2 in main paper.}
\label{tab:tab3}
\end{table*}
% Comparison under NV noise (gumbel and uniform) for linear SEM on the ER-2 and SF-2 ($n_v = 20$, $n_d=2n_v$, $p_a,p_b,p_c = 0.3,0.3,1$) .

\begin{table*}[htbp]
\centering
\small
\begin{tabular}{l*{8}{c}}
\toprule
\multirow{2}{*}[\multirowoffset]{Method} & \multicolumn{2}{c}{Gauss} & \multicolumn{2}{c}{Exp} & \multicolumn{2}{c}{Gumbel} & \multicolumn{2}{c}{Uniform} \\
\cmidrule(lr){2-3} \cmidrule(lr){4-5} \cmidrule(lr){6-7} \cmidrule(lr){8-9}
 & F1($\uparrow$) & SHD($\downarrow$) & F1($\uparrow$) & SHD($\downarrow$) & F1($\uparrow$) & SHD($\downarrow$)  & F1($\uparrow$) & SHD($\downarrow$) \\

\cmidrule(lr){1-9}
PC\_stable   & 0.360 & 63.3 & 0.341 & 65.3 & 0.352 & 64.6 & 0.364 & 63.7 \\
LiNGAM       & 0.163 & 79.9 & 0.212 & 78.9 & 0.240 & 78.6 & 0.174 & 79.3 \\
NTS-B          & 0.545 & \textbf{51.2} & 0.534 & \textbf{53.2} & 0.542 & \textbf{52.9} & \textbf{0.574} & \textbf{48.9} \\
ECA          & 0.493 & 61.0 & 0.503 & 62.6 & 0.492 & 62.3 & 0.488 & 62.0 \\
RoaDs (Ours)      & \textbf{0.559} & 52.1 & \textbf{0.555} & 53.4 & \textbf{0.566} & 53.1 & 0.561 & 52.9 \\
\cmidrule(lr){1-9}
GOLEM-EV    & 0.546 & 50.4  & 0.531 & 52.8  & 0.555 & 50.4  & 0.553 & 49.7 \\
\bottomrule
\end{tabular}
\caption{Comparison for the linear SEM (EV) on the ER-4 ($n_v = 20$, $n_d=2n_v$, $p_a,p_b,p_c = 0.3,0.3,1$).}
\label{tab:tab4}
\end{table*}

\begin{table*}[htbp]
\centering
\small
\begin{tabular}{l*{8}{c}}
\toprule
\multirow{2}{*}[\multirowoffset]{Method} & \multicolumn{2}{c}{Gauss} & \multicolumn{2}{c}{Exp} & \multicolumn{2}{c}{Gumbel} & \multicolumn{2}{c}{Uniform} \\
\cmidrule(lr){2-3} \cmidrule(lr){4-5} \cmidrule(lr){6-7} \cmidrule(lr){8-9}
 & F1($\uparrow$) & SHD($\downarrow$) & F1($\uparrow$) & SHD($\downarrow$) & F1($\uparrow$) & SHD($\downarrow$)  & F1($\uparrow$) & SHD($\downarrow$) \\
\cmidrule(lr){1-9}
PC\_stable   & 0.339 & 56.3 & 0.342 & 56.4 & 0.354 & 56.0 & 0.345 & 57.3 \\
LiNGAM       & 0.167 & 68.3 & 0.178 & 68.2 & 0.174 & 67.6 & 0.199 & 67.6 \\
NTS-B          & 0.581 & \textbf{41.7} & 0.506 & \textbf{47.6} & 0.556 & \textbf{44.2} & 0.569 & \textbf{42.1} \\
ECA          & 0.495 & 51.8 & 0.492 & 53.8 & 0.491 & 54.3 & 0.498 & 53.9 \\
RoaDs (Ours)      & \textbf{0.590} & 44.1 & \textbf{0.528} & 49.4 & \textbf{0.572} & 45.9 & \textbf{0.582} & 45.1 \\
\cmidrule(lr){1-9}
GOLEM-EV      & 0.540 & 44.1 & 0.527 & 45.7 & 0.546 & 44.4 & 0.551 & 43.0 \\
\bottomrule
\end{tabular}
\caption{Comparison for the linear SEM (EV) on the SF-4 ($n_v = 20$, $n_d=2n_v$, $p_a,p_b,p_c = 0.3,0.3,1$).}
\label{tab:tab5}
\end{table*}

\begin{table*}[htbp]
\centering
\small
\begin{tabular}{l*{8}{c}}
\toprule
\multirow{2}{*}[\multirowoffset]{Method} & \multicolumn{2}{c}{Gauss} & \multicolumn{2}{c}{Exp} & \multicolumn{2}{c}{Gumbel}& \multicolumn{2}{c}{Uniform} \\
\cmidrule(lr){2-3} \cmidrule(lr){4-5} \cmidrule(lr){6-7} \cmidrule(lr){8-9}
 & F1($\uparrow$) & SHD($\downarrow$) & F1($\uparrow$) & SHD($\downarrow$) & F1($\uparrow$) & SHD($\downarrow$)  & F1($\uparrow$) & SHD($\downarrow$) \\
\cmidrule(lr){1-9}
PC\_stable   & 0.429 & 13.8 & 0.459 & 13.4 & 0.453 & 13.4 & 0.470 & 12.8 \\
LiNGAM       & 0.302 & 23.1 & 0.416 & 22.9 & 0.386 & 23.3 & 0.414 & 20.5 \\
NTS-B          & \textbf{0.864} & \textbf{4.60} & 0.697 & 10.4 & \textbf{0.821} & \textbf{5.90} & 0.852 & 5.10 \\
ECA          & 0.739 & 9.80 & 0.653 & 13.7 & 0.746 & 9.20 & 0.766 & 9.00 \\
RoaDs (Ours)        & 0.861 & \textbf{4.60} & \textbf{0.735} & \textbf{9.20} & 0.812 & 6.10 & \textbf{0.866} & \textbf{4.70} \\
\cmidrule(lr){1-9}
GOLEM-EV      & 0.858 & 4.80 & 0.740 & 8.90 & 0.812 & 6.10 & 0.864 & 4.80 \\
\bottomrule
\end{tabular}
\caption{Comparison for the linear SEM (EV) on the ER-1 ($n_v = 20$, $n_d=2n_v$, $p_a,p_b,p_c = 0.3,0.3,1$).}
\label{tab:tab6}
\end{table*}

\begin{table*}[htbp]
\centering
\small
\begin{tabular}{l*{8}{c}}
\toprule
\multirow{2}{*}[\multirowoffset]{Method} & \multicolumn{2}{c}{Gauss} & \multicolumn{2}{c}{Exp} & \multicolumn{2}{c}{Gumbel}& \multicolumn{2}{c}{Uniform} \\
\cmidrule(lr){2-3} \cmidrule(lr){4-5} \cmidrule(lr){6-7} \cmidrule(lr){8-9}
 & F1($\uparrow$) & SHD($\downarrow$) & F1($\uparrow$) & SHD($\downarrow$) & F1($\uparrow$) & SHD($\downarrow$)  & F1($\uparrow$) & SHD($\downarrow$) \\
\cmidrule(lr){1-9}
PC\_stable   & 0.434 & 13.8 & 0.442 & 13.8 & 0.476 & 12.8 & 0.483 & 12.6 \\
LiNGAM       & 0.260 & 27.7 & 0.369 & 24.6 & 0.363 & 22.9 & 0.271 & 27.7 \\
NTS-B          & 0.838 & 5.60 & 0.695 & 11.2 & 0.746 & 8.70 & 0.797 & 6.90 \\
ECA          & 0.704 & 11.6 & 0.622 & 15.7 & 0.651 & 14.1 & 0.724 & 10.4 \\
RoaDs (Ours)        & \textbf{0.840} & \textbf{5.20} & \textbf{0.702} & \textbf{10.8} & \textbf{0.788} & \textbf{7.20} & \textbf{0.835} & \textbf{5.70} \\
\cmidrule(lr){1-9}
GOLEM-EV     & 0.839 & 5.20 & 0.697 & 11.0 & 0.786 & 7.30 & 0.843 & 5.50 \\
\bottomrule
\end{tabular}
\caption{Comparison for the linear SEM (EV) on the SF-1 ($n_v = 20$, $n_d=2n_v$, $p_a,p_b,p_c = 0.3,0.3,1$).}
\label{tab:tab7}
\end{table*}

\begin{table*}[htbp]
\centering
\small
\begin{tabular}{l*{8}{c}}
\toprule
\multirow{2}{*}[\multirowoffset]{Method} & \multicolumn{2}{c}{MLP (ER)} & \multicolumn{2}{c}{GP (ER)} & \multicolumn{2}{c}{MLP (SF)}& \multicolumn{2}{c}{GP (SF)} \\
\cmidrule(lr){2-3} \cmidrule(lr){4-5} \cmidrule(lr){6-7} \cmidrule(lr){8-9}
 & F1($\uparrow$) & SHD($\downarrow$) & F1($\uparrow$) & SHD($\downarrow$) & F1($\uparrow$) & SHD($\downarrow$)  & F1($\uparrow$) & SHD($\downarrow$) \\
\midrule
PC\_stable   & 0.329 & 64.6 & 0.315 & 65.9 & 0.316 & 55.9 & 0.327 & \textbf{57.4} \\
LiNGAM       & 0.207 & 74.7 & 0.044 & 74.1 & 0.156 & 54.0 & 0.042 & 64.5 \\
NTS-B          & 0.490 & 106.8 & 0.416 & 114.3 & 0.464 & 105.3 & 0.370 & 120.0 \\
ECA          & 0.507 & 102.2 & \textbf{0.433} & 112.8 & 0.468 & 104.5 & \textbf{0.390} & 116.8 \\
RoaDs (Ours)        & \textbf{0.512} & \textbf{56.1} & 0.340 & \textbf{65.6} & \textbf{0.530} & \textbf{48.0} & 0.307 & 59.5 \\
\cmidrule(lr){1-9}
NOTEARS-MLP    & 0.424 & 63.3 & 0.026 & 73.9 & 0.425 & 54.8 & 0.045 & 63.9 \\
\bottomrule
\end{tabular}
\caption{Comparison for nonlinear SEM on the ER-4 and SF-4 ($n_v = 20$, $n_d=2n_v$, $p_a,p_b,p_c = 0.3,0.3,1$).}
\label{tab:tab8}
\end{table*}

\begin{table*}[htbp]
\centering
\small
\begin{tabular}{l*{8}{c}}
\toprule
\multirow{2}{*}[\multirowoffset]{Method} & \multicolumn{2}{c}{MLP (ER)} & \multicolumn{2}{c}{GP (ER)} & \multicolumn{2}{c}{MLP (SF)}& \multicolumn{2}{c}{GP (SF)} \\
\cmidrule(lr){2-3} \cmidrule(lr){4-5} \cmidrule(lr){6-7} \cmidrule(lr){8-9}
 & F1($\uparrow$) & SHD($\downarrow$) & F1($\uparrow$) & SHD($\downarrow$) & F1($\uparrow$) & SHD($\downarrow$)  & F1($\uparrow$) & SHD($\downarrow$) \\
\midrule
PC\_stable   & 0.387 & 16.7 & 0.256 & 21.1 & 0.423 & 16.2 & 0.274 & 22.1 \\
LiNGAM       & 0.224 & 21.2 & 0.132 & 20.5 & 0.172 & 23.0 & 0.157 & 21.3 \\
NTS-B          & 0.212 & 108.0 & 0.152 & 118.1 & 0.227 & 106.0 & 0.177 & 116.4 \\
ECA          & 0.208 & 109.0 & 0.164 & 117.4 & 0.217 & 106.3 & 0.189 & 113.9 \\
RoaDs (Ours)        & \textbf{0.572} & \textbf{14.8} & \textbf{0.385} & \textbf{16.3} & \textbf{0.651} & \textbf{11.6} & \textbf{0.368} & \textbf{16.7} \\
\cmidrule(lr){1-9}
NOTEARS-MLP     & 0.535 & 15.0 & 0.137 & 18.6 & 0.540 & 15.1 & 0.145 & 18.8 \\
\bottomrule
\end{tabular}
\caption{Comparison for nonlinear SEM on the ER-1 and SF-1 ($n_v = 20$, $n_d=2n_v$, $p_a,p_b,p_c = 0.3,0.3,1$).}
\label{tab:tab9}
\end{table*}

\begin{table*}[htbp]
\centering
\small
\begin{tabular}{l*{8}{c}}
\toprule
\multirow{2}{*}[\multirowoffset]{Method} & \multicolumn{2}{c}{Gauss} & \multicolumn{2}{c}{Exp} & \multicolumn{2}{c}{Gumbel}& \multicolumn{2}{c}{Uniform} \\
\cmidrule(lr){2-3} \cmidrule(lr){4-5} \cmidrule(lr){6-7} \cmidrule(lr){8-9}
 & F1($\uparrow$) & SHD($\downarrow$) & F1($\uparrow$) & SHD($\downarrow$) & F1($\uparrow$) & SHD($\downarrow$)  & F1($\uparrow$) & SHD($\downarrow$) \\
\cmidrule(lr){1-9}
PC\_stable   & 0.443 & 27.2 & 0.451 & 27.1 & 0.437 & 27.8 & 0.440 & 27.7 \\
LiNGAM       & 0.187 & 48.7 & 0.366 & 40.9 & 0.341 & 43.2 & 0.313 & 45.0 \\
NTS-B          & 0.793 & 12.6 & 0.746 & 15.4 & 0.763 & 14.7 & 0.766 & 14.1 \\
ECA          & 0.683 & 21.3 & 0.632 & 26.1 & 0.666 & 23.4 & 0.692 & 21.3 \\
RoaDs (Ours)        & \textbf{0.829} & \textbf{10.6} & \textbf{0.771} & \textbf{14.6} & \textbf{0.826} & \textbf{10.9} & \textbf{0.815} & \textbf{11.3} \\
\cmidrule(lr){1-9}
GOLEM-EV     & 0.818 & 11.3 & 0.750 & 15.3 & 0.796 & 12.6 & 0.791 & 12.3 \\
\bottomrule
\end{tabular}
\caption{Comparison for the linear SEM (EV) on the ER-2 ($n_v = 20$, $n_d=4n_v$, $p_a,p_b,p_c = 0.3,0.3,1$).}
\label{tab:tab10}
\end{table*}

\begin{table*}[htbp]
\centering
\small
\begin{tabular}{l*{8}{c}}
\toprule
\multirow{2}{*}[\multirowoffset]{Method} & \multicolumn{2}{c}{Gauss} & \multicolumn{2}{c}{Exp} & \multicolumn{2}{c}{Gumbel}& \multicolumn{2}{c}{Uniform} \\
\cmidrule(lr){2-3} \cmidrule(lr){4-5} \cmidrule(lr){6-7} \cmidrule(lr){8-9}
 & F1($\uparrow$) & SHD($\downarrow$) & F1($\uparrow$) & SHD($\downarrow$) & F1($\uparrow$) & SHD($\downarrow$)  & F1($\uparrow$) & SHD($\downarrow$) \\
\cmidrule(lr){1-9}
PC\_stable   & 0.369 & 30.9 & 0.389 & 30.6 & 0.417 & 29.6 & 0.426 & 29.3 \\
LiNGAM       & 0.178 & 51.6 & 0.245 & 52.4 & 0.248 & 51.2 & 0.285 & 49.6 \\
NTS-B          & 0.765 & 13.9 & 0.722 & 17.0 & 0.725 & 15.9 & 0.765 & 14.1 \\
ECA          & 0.637 & 24.3 & 0.627 & 25.8 & 0.620 & 26.0 & 0.637 & 25.5 \\
RoaDs (Ours)        & \textbf{0.797} & \textbf{12.7} & \textbf{0.748} & \textbf{15.7} & \textbf{0.788} & \textbf{13.2} & \textbf{0.812} & \textbf{11.6} \\
\cmidrule(lr){1-9}
GOLEM-EV     & 0.760 & 14.3 & 0.738 & 16.4 & 0.764 & 14.3 & 0.782 & 13.4 \\
\bottomrule
\end{tabular}
\caption{Comparison for linear SEM (EV) on the SF-2 ($n_v = 20$, $n_d=4n_v$, $p_a,p_b,p_c = 0.3,0.3,1$).}
\label{tab:tab11}
\end{table*}

\begin{table*}[htbp]
\centering
\small
\begin{tabular}{l*{8}{c}}
\toprule
\multirow{2}{*}[\multirowoffset]{Method} & \multicolumn{2}{c}{MLP (ER)} & \multicolumn{2}{c}{GP (ER)} & \multicolumn{2}{c}{MLP (SF)}& \multicolumn{2}{c}{GP (SF)} \\
\cmidrule(lr){2-3} \cmidrule(lr){4-5} \cmidrule(lr){6-7} \cmidrule(lr){8-9}
 & F1($\uparrow$) & SHD($\downarrow$) & F1($\uparrow$) & SHD($\downarrow$) & F1($\uparrow$) & SHD($\downarrow$)  & F1($\uparrow$) & SHD($\downarrow$) \\
\cmidrule(lr){1-9}
PC\_stable   & 0.391 & 31.3 & 0.338 & 34.0 & 0.392 & 31.2 & 0.309 & 35.0 \\
LiNGAM       & 0.154 & 40.2 & 0.071 & 35.4 & 0.183 & 38.2 & 0.061 & 35.9 \\
NTS-B          & 0.362 & 102.3 & 0.280 & 102.1 & 0.392 & 94.2 & 0.328 & 92.6 \\
ECA          & 0.372 & 100.7 & 0.307 & 102.0 & 0.391 & 95.3 & 0.312 & 101.7 \\
RoaDs (Ours)        & \textbf{0.633} & \textbf{24.1} & \textbf{0.375} & \textbf{32.4} & \textbf{0.607} & \textbf{24.1} & \textbf{0.367} & \textbf{32.2} \\
\cmidrule(lr){1-9}
NOTEARS-MLP     & 0.568 & 26.3 & 0.055 & 35.8 & 0.549 & 25.5 & 0.078 & 35.8 \\
\bottomrule
\end{tabular}
\caption{Comparison for nonlinear SEM on the ER-2 and SF-2 ($n_v = 20$, $n_d=4n_v$, $p_a,p_b,p_c = 0.3,0.3,1$).}
\label{tab:tab12}
\end{table*}

\begin{table*}[htbp]
\centering
\small
\begin{tabular}{l*{8}{c}}
\toprule
\multirow{2}{*}[\multirowoffset]{Method} & \multicolumn{2}{c}{Gauss} & \multicolumn{2}{c}{Exp} & \multicolumn{2}{c}{Gumbel}& \multicolumn{2}{c}{Uniform} \\
\cmidrule(lr){2-3} \cmidrule(lr){4-5} \cmidrule(lr){6-7} \cmidrule(lr){8-9}
 & F1($\uparrow$) & SHD($\downarrow$) & F1($\uparrow$) & SHD($\downarrow$) & F1($\uparrow$) & SHD($\downarrow$)  & F1($\uparrow$) & SHD($\downarrow$) \\
\cmidrule(lr){1-9}
PC\_stable   & 0.359 & 70.1 & 0.369 & 69.9 & 0.360 & 70.1 & 0.367 & 69.7 \\
LiNGAM       & / & 80.3 & / & 80.3 & / & 80.3 & / & 80.3 \\
NTS-B          & 0.783 & 31.3 & 0.723 & 40.9 & 0.745 & 36.8 & 0.747 & 36.0 \\
ECA          & 0.602 & 63.1 & 0.589 & 66.6 & 0.593 & 64.1 & 0.612 & 62.1 \\
RoaDs (Ours)        & \textbf{0.790} & \textbf{29.8} & \textbf{0.756} & \textbf{35.1} & \textbf{0.781} & \textbf{31.8} & \textbf{0.799} & \textbf{29.1} \\
\cmidrule(lr){1-9}
GOLEM-EV    & 0.747 & 35.3 & 0.747 & 36.2 & 0.750 & 35.9 & 0.764 & 33.6 \\
\bottomrule
\end{tabular}
\caption{Comparison for the linear SEM (EV) on the ER-2 ($n_v = 40$, $n_d=2n_v$, $p_a,p_b,p_c = 0.3,0.3,1$).}
\label{tab:tab13}
\end{table*}

\begin{table*}[htbp]
\centering
\small
\begin{tabular}{l*{8}{c}}
\toprule
\multirow{2}{*}[\multirowoffset]{Method} & \multicolumn{2}{c}{Gauss} & \multicolumn{2}{c}{Exp} & \multicolumn{2}{c}{Gumbel}& \multicolumn{2}{c}{Uniform} \\
\cmidrule(lr){2-3} \cmidrule(lr){4-5} \cmidrule(lr){6-7} \cmidrule(lr){8-9}
 & F1($\uparrow$) & SHD($\downarrow$) & F1($\uparrow$) & SHD($\downarrow$) & F1($\uparrow$) & SHD($\downarrow$)  & F1($\uparrow$) & SHD($\downarrow$) \\
\cmidrule(lr){1-9}
PC\_stable   & 0.352 & 66.1 & 0.336 & 67.4 & 0.330 & 68.1 & 0.336 & 67.5 \\
LiNGAM       & / & 76.0 & / & 76.0 & / & 76.0 & / & 76.0 \\
NTS-B          & 0.665 & 43.6 & 0.626 & 48.8 & 0.625 & 48.4 & 0.658 & 43.6 \\
ECA          & 0.557 & 63.7 & 0.542 & 66.9 & 0.558 & 64.2 & 0.569 & 60.9 \\
RoaDs (Ours)       & \textbf{0.691} & \textbf{38.8} & \textbf{0.669} & \textbf{42.6} & \textbf{0.669} & \textbf{43.3} & \textbf{0.707} & \textbf{38.1} \\
\cmidrule(lr){1-9}
GOLEM-EV    & 0.661 & 41.9 & 0.630 & 47.1 & 0.646 & 44.4 & 0.677 & 40.4 \\
\bottomrule
\end{tabular}
\caption{Comparison for linear SEM (EV) on the SF-2 ($n_v = 40$, $n_d=2n_v$, $p_a,p_b,p_c = 0.3,0.3,1$).}
\label{tab:tab14}
\end{table*}

\begin{figure}[htbp]
    \centering
    \includegraphics[width=0.9\linewidth]{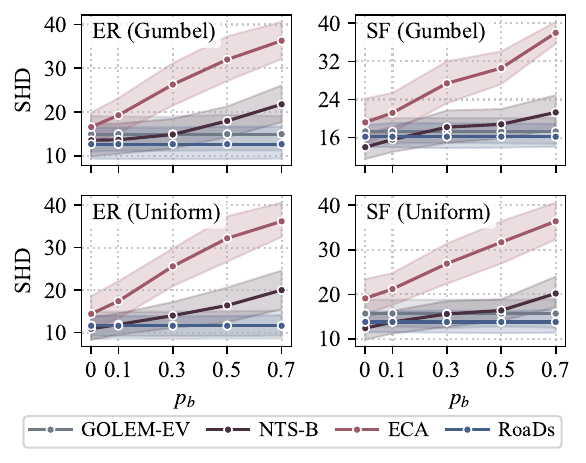}
    \caption{SHD of continuous-based methods under different \(p_b\) (linear SEM with gumbel and uniform noise (EV)).}
    \label{fig:fig5}
\end{figure}

\begin{figure}[htbp]
    \centering
    \includegraphics[width=0.9\linewidth]{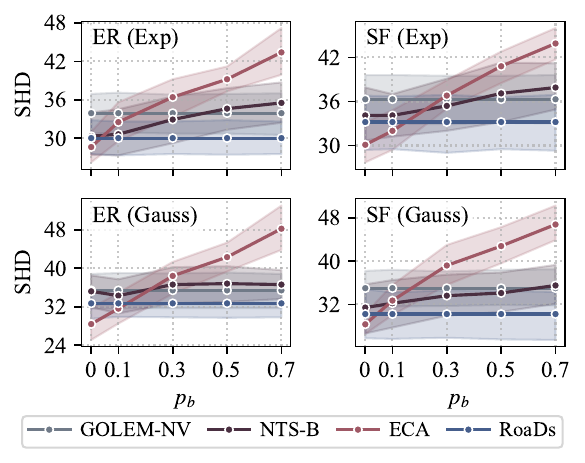}
    \caption{SHD of continuous-based methods under different \(p_b\) (linear SEM with exp and gauss noise (NV)).}
    \label{fig:fig6}
\end{figure}

\begin{figure}[htbp]
    \centering
    \includegraphics[width=0.9\linewidth]{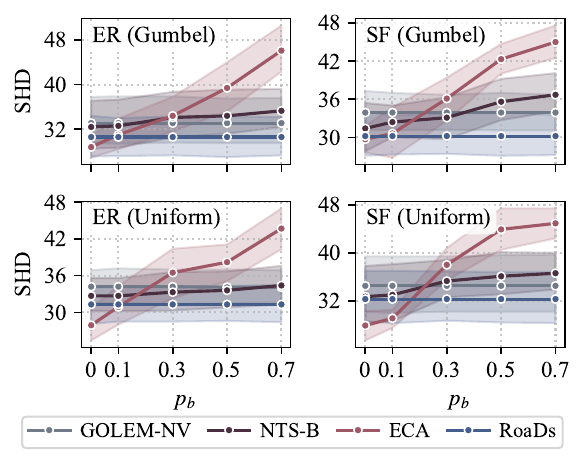}
    \caption{SHD of continuous-based methods under different \(p_b\) (linear SEM with gumbel and uniform noise (NV)).}
    \label{fig:fig7}
\end{figure}

\begin{figure}[htbp]
    \centering
    \includegraphics[width=0.9\linewidth]{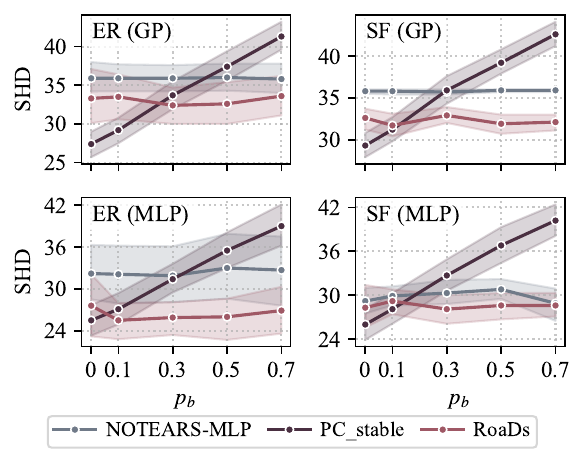}
    \caption{SHD of continuous-based methods and PC-stable under different \(p_b\) (nonlinear SEM).}
    \label{fig:fig8}
\end{figure}

\begin{figure}[htbp]
    \centering
    \includegraphics[width=0.9\linewidth]{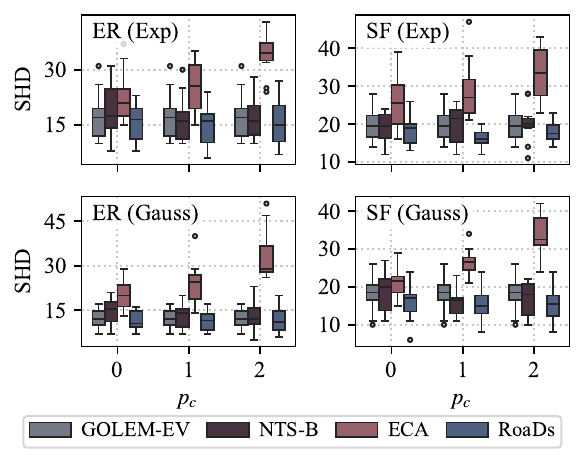}
    \caption{SHD of continuous-based methods under different \(p_c\) (linear SEM with exp and gauss noise (EV)).}
    \label{fig:fig9}
\end{figure}

\begin{figure}[htbp]
    \centering
    \includegraphics[width=0.9\linewidth]{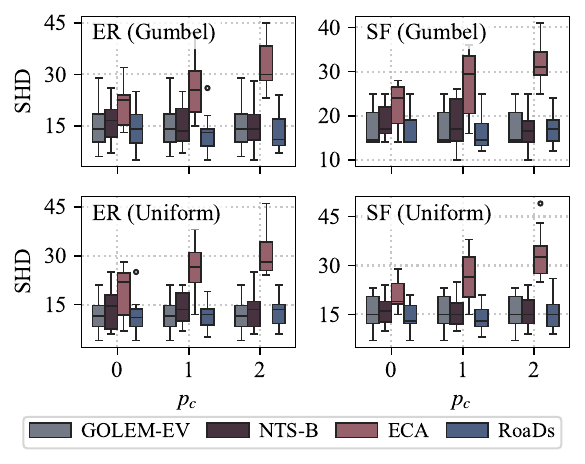}
    \caption{SHD of continuous-based methods under different \(p_c\) (linear SEM with gumbel and uniform noise (EV)).}
    \label{fig:fig10}
\end{figure}

\begin{figure}[htbp]
    \centering
    \includegraphics[width=0.9\linewidth]{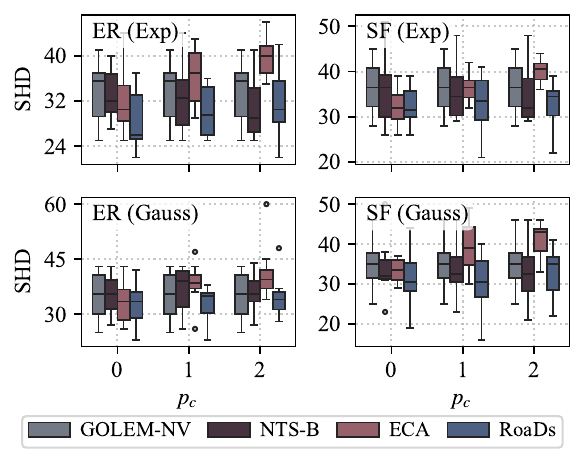}
    \caption{SHD of continuous-based methods under different \(p_c\) (linear SEM with exp and gauss noise (NV)).}
    \label{fig:fig11}
\end{figure}

\begin{figure}[htbp]
    \centering
    \includegraphics[width=0.9\linewidth]{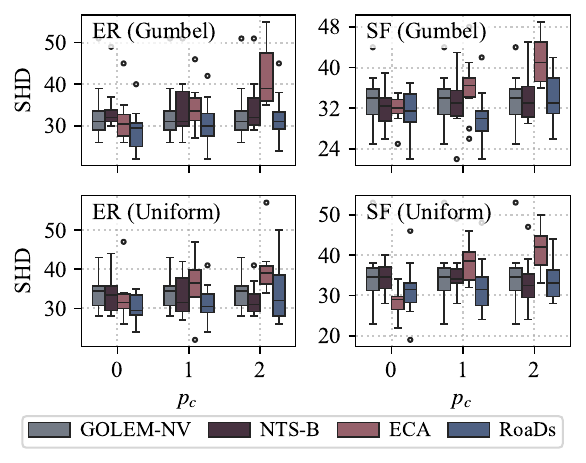}
    \caption{SHD of continuous-based methods under different \(p_c\) (linear SEM with gumbel and uniform noise (NV)).}
    \label{fig:fig12}
\end{figure}

\begin{figure}[htbp]
    \centering
    \includegraphics[width=0.9\linewidth]{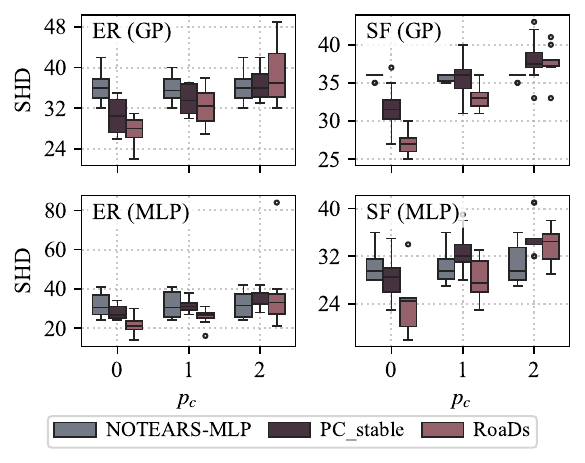}
    \caption{SHD of continuous-based methods and PC-stable under different \(p_c\) (nonlinear SEM).}
    \label{fig:fig13}
\end{figure}

\begin{table*}[htbp]
\centering
\begin{tabular}{l*{8}{c}}
\toprule
\multirow{2}{*}[\multirowoffset]{Method} & \multicolumn{2}{c}{Gauss} & \multicolumn{2}{c}{Exp} & \multicolumn{2}{c}{Gumbel} & \multicolumn{2}{c}{Uniform} \\
\cmidrule(lr){2-3} \cmidrule(lr){4-5} \cmidrule(lr){6-7} \cmidrule(lr){8-9}
 & F1($\uparrow$) & SHD($\downarrow$) & F1($\uparrow$) & SHD($\downarrow$) & F1($\uparrow$) & SHD($\downarrow$)  & F1($\uparrow$) & SHD($\downarrow$) \\

\cmidrule(lr){1-9}
Loss+   & \textbf{0.750} & \textbf{15.2} & \textbf{0.734} & \textbf{16.4} & \textbf{0.732} & \textbf{16.2} & \textbf{0.777} & \textbf{13.8} \\
L2      & 0.661 & 23.6 & 0.595 & 29.4 & 0.633 & 25.9 & 0.668 & 23.6 \\
Loss          & 0.729 & 16.3 & 0.703 & 18.4 & \textbf{0.732} & \textbf{16.3} & 0.759 & 14.3 \\
None          & 0.718 & 19.3 & 0.639 & 25.9 & 0.682 & 22.2 & 0.710 & 20.0 \\
\bottomrule
\end{tabular}
\caption{Comparison for different normalization methods in RoaDs on SF-2 (linear EV case, $n_v = 20$, $n_d=2n_v$, $p_a,p_b,p_c = 0.3,0.3,1$).}
\label{tab:tab15}
\end{table*}

\begin{table*}[htbp]
\centering
\begin{tabular}{l*{8}{c}}
\toprule
\multirow{2}{*}[\multirowoffset]{Method} & \multicolumn{2}{c}{Gauss} & \multicolumn{2}{c}{Exp} & \multicolumn{2}{c}{Gumbel} & \multicolumn{2}{c}{Uniform} \\
\cmidrule(lr){2-3} \cmidrule(lr){4-5} \cmidrule(lr){6-7} \cmidrule(lr){8-9}
 & F1($\uparrow$) & SHD($\downarrow$) & F1($\uparrow$) & SHD($\downarrow$) & F1($\uparrow$) & SHD($\downarrow$)  & F1($\uparrow$) & SHD($\downarrow$) \\

\cmidrule(lr){1-9}
Loss+   & \textbf{0.821} & \textbf{11.4} & \textbf{0.777} & \textbf{14.6} & \textbf{0.805} & \textbf{12.7} & 0.818 & 11.6 \\
L2      & 0.689 & 22.0 & 0.630 & 26.3 & 0.699 & 21.2 & 0.689 & 21.7 \\
Loss          & 0.815 & 11.6 & 0.766 & 15.5 & 0.796 & 13.1 & \textbf{0.828} & \textbf{11.1} \\
None          & 0.718 & 19.1 & 0.675 & 23.4 & 0.741 & 18.1 & 0.714 & 19.6 \\
\bottomrule
\end{tabular}
\caption{Comparison for different normalization methods in RoaDs on ER-2 (linear EV case, $n_v = 20$, $n_d=2n_v$, $p_a,p_b,p_c = 0.3,0.3,1$).}
\label{tab:tab16}
\end{table*}

\begin{table}[htbp]
\centering
\begin{tabular}{l*{4}{c}}
\toprule
\multirow{2}{*}[\multirowoffset]{Method} & \multicolumn{2}{c}{Gauss} & \multicolumn{2}{c}{Exp} \\
\cmidrule(lr){2-3} \cmidrule(lr){4-5} 
 & F1($\uparrow$) & SHD($\downarrow$) & F1($\uparrow$) & SHD($\downarrow$) \\
\cmidrule(lr){1-5}
Linear & \textbf{0.821} & \textbf{11.4} & \textbf{0.777} & \textbf{14.6}  \\
Lasso     & 0.818 & 11.5 & 0.760 & 15.6 \\
\bottomrule
\end{tabular}
\caption{Comparison for different surrogate models in RoaDs on ER-2 (linear EV case, $n_v = 20$, $n_d=2n_v$, $p_a,p_b,p_c = 0.3,0.3,1$).}
\label{tab:tab17}
\end{table}

\begin{table}[htbp]
\centering
\begin{tabular}{l*{4}{c}}
\toprule
\multirow{2}{*}[\multirowoffset]{Method} & \multicolumn{2}{c}{MLP} & \multicolumn{2}{c}{GP} \\
\cmidrule(lr){2-3} \cmidrule(lr){4-5} 
 & F1($\uparrow$) & SHD($\downarrow$) & F1($\uparrow$) & SHD($\downarrow$) \\
\cmidrule(lr){1-5}
Radom Forest & \textbf{0.578} & \textbf{25.9} & \textbf{0.350} & \textbf{32.6}  \\
Polynomial     & 0.553 & 27.9 & \textbf{0.350} & 33.2 \\
\bottomrule
\end{tabular}
\caption{Comparison for different surrogate models in RoaDs on ER-2 (nonlinear case, $n_v = 20$, $n_d=2n_v$, $p_a,p_b,p_c = 0.3,0.3,1$).}
\label{tab:tab18}
\end{table}

\begin{table}[htbp]
\centering
\footnotesize
\begin{tabular}{l*{8}{c}}
\toprule
\multirow{2}{*}[\multirowoffset]{Method} & \multicolumn{2}{c}{thres = 0.05} & \multicolumn{2}{c}{thres = 0.2} & \multicolumn{2}{c}{thres = 0.3} \\
\cmidrule(lr){2-3} \cmidrule(lr){4-5} \cmidrule(lr){6-7}
 & F1& SHD & F1 & SHD & F1 & SHD  \\
 \cmidrule(lr){1-7}
PC\_stable & 0.333 & 14.0  & 0.333 & 14.0 & 0.333 & 14.0 \\
LiNGAM & - & - & - &- & - & - \\
NTS-B   & 0.308 & 15.0  & 0.333 &14.0 &0.364& \textbf{13.0} \\
ECA   &  0.414 & 17.0 &0.414 & 17.0 & \textbf{0.414} & 17.0  \\
RoaDs  & \textbf{0.563} & \textbf{13.0} & \textbf{0.417} & \textbf{13.0} &0.364&\textbf{13.0} \\
\cmidrule(lr){1-7}
GOLEM-NV & 0.364 & 13.0 & 0.364 & 13.0& 0.364 & 13.0 \\
\bottomrule
\end{tabular}
\caption{Comparison on Saches dataset with different thresholds.}
\label{tab:tab19}
\end{table}

\begin{figure}[htbp]
    \centering
    \includegraphics[width=0.9\linewidth]{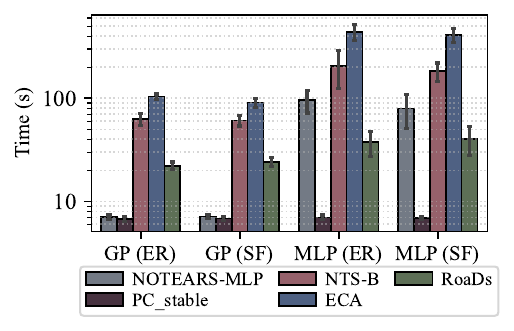}
    \caption{Time cost of continuous-based methods and PC-stable (nonlinear SEM, \(n_v=20,n_d=2n_v,k=2\)).}
    \label{fig:fig14}
\end{figure}

\begin{figure*}[htbp]
    \centering
    \includegraphics[width=1.0\linewidth]{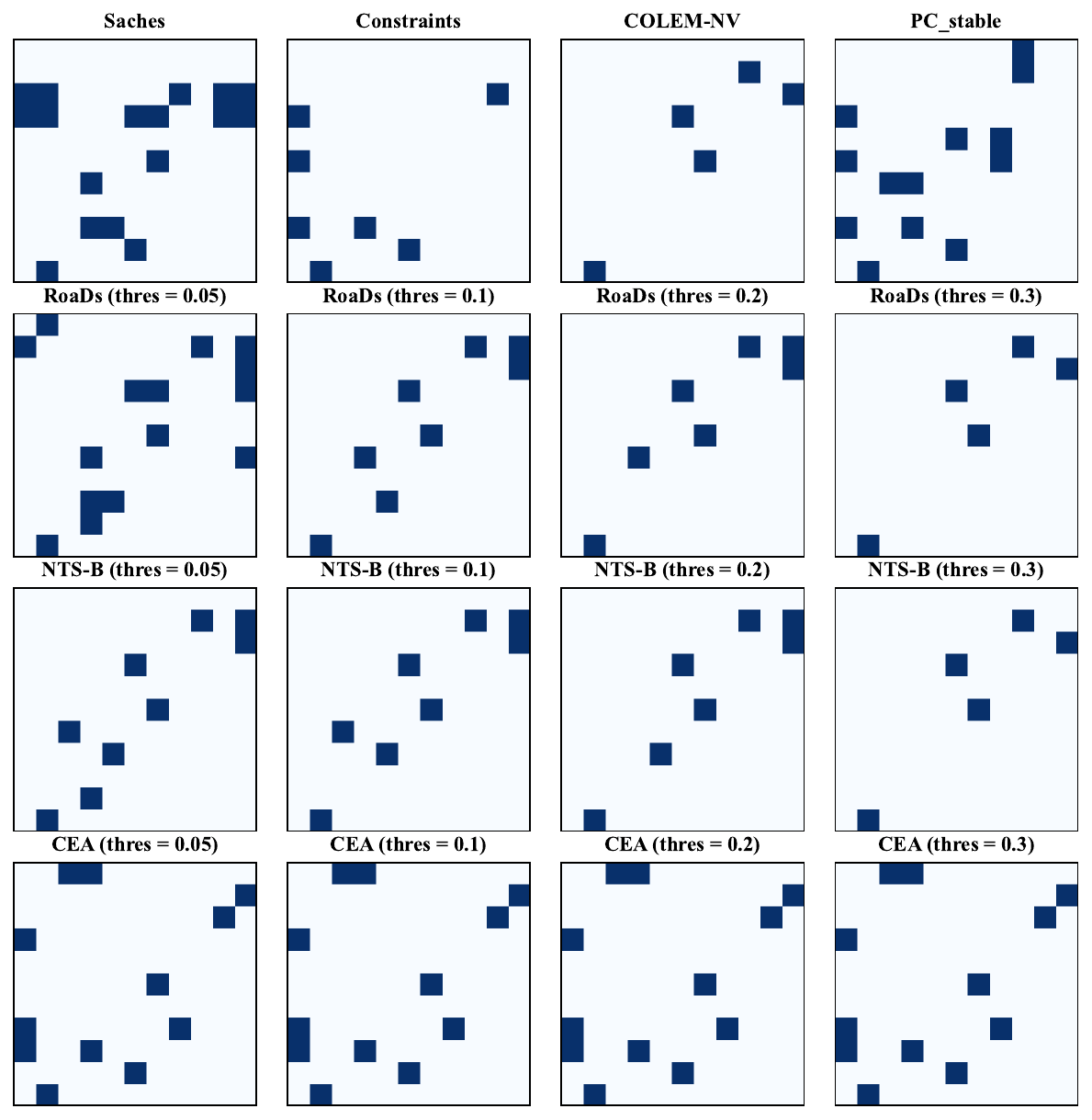}
    \caption{Visualization of DAG learned by different methods on Saches \cite{sachs2005causal}. }
    \label{fig:fig15}
\end{figure*}

\clearpage

\bibliography{allrefs}

% \end{document}

\end{document}